\renewcommand{\Pr}{ \mathbb{P} }
\newcommand{\A}{ \mathcal{A} }
\newcommand{\R}{\mathbb{R}}
\newcommand{\E}{\mathbb{E}}
\newcommand{\m}{\mathrm{m}}
\theoremstyle{plain}
\newtheorem{theorem}{Theorem}
\newtheorem{lemma}{Lemma}
\theoremstyle{definition}
\newtheorem{definition}{Definition}
\newtheorem{assumption}{Assumption}
\newtheorem{remark}{Remark}
\title{A Lipschitz Bandits Approach for Continuous Hyperparameter Optimization}
\author{Yasong Feng\qquad
	Weijian Luo\qquad
    Yimin Huang\qquad
	Tianyu Wang
}
\date{}
\begin{document}

\maketitle

\begin{abstract}
One of the most critical problems in machine learning is HyperParameter Optimization (HPO), since choice of hyperparameters has a significant impact on final model performance. Although there are many HPO algorithms, they either have no theoretical guarantees or require strong assumptions. To this end, we introduce BLiE -- a Lipschitz-bandit-based algorithm for HPO that only assumes Lipschitz continuity of the objective function. BLiE exploits the landscape of the objective function to adaptively search over the hyperparameter space. Theoretically, we show that $(i)$ BLiE finds an $\epsilon$-optimal hyperparameter with $\mathcal{O} \left( \epsilon^{-(d_z + \beta)}\right)$ total budgets, where $d_z$ and $\beta$ are problem intrinsic; $(ii)$ BLiE is highly parallelizable. Empirically, we demonstrate that BLiE outperforms the state-of-the-art HPO algorithms on benchmark tasks. We also apply BLiE to search for noise schedule of diffusion models. Comparison with the default schedule shows that BLiE schedule greatly improves the sampling speed.
\end{abstract}

\setlength{\parindent}{0pt}

\section{Introduction}
Success of modern machine learning models heavily relies on the choice of hyperparameters. These hyperparameters are difficult to set, because of high training cost of the complex models. 
Therefore, practitioners are in great need of efficient algorithms for finding good hyperparameter configurations.


\begin{wrapfigure}{r}{0.5\textwidth}
    \centering
    \includegraphics[width=6cm]{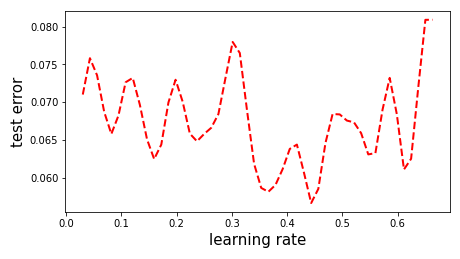}
    \caption{Test error of a CNN-classifier as a function of learning rate.}
    \label{fig:landscape}
\end{wrapfigure} 

In practice, many hyperparameters need to be chosen from continuous spaces. An important example is the learning rate; See Figure \ref{fig:landscape} for an illustration.  
From Figure \ref{fig:landscape} we observe: (1) the choice of hyperparameters has a great impact on model performance; (2) the objective function is continuous, but not well-behaving. Similar hyperparameters include weight parameters, noise schedules in stochastic models, and so on. There have been a lot of methods developed to tackle these problems, model-based \citep{10.5555/3104322.3104451,shahriari2015taking,falkner2018bohb,huang2022improving} or model-free  \citep{wu2011experiments,bergstra2012random,jamieson2016non,li2017hyperband}. However, existing model-based methods lack theoretical guarantees, unless harsh conditions are imposed; model-free methods seldomly search over the hyperparameter space adaptively, which may lead to inefficiency and worse final performance. Therefore, we need a method that (1) takes advantage of the continuity of objective function to guide searching; (2) has better theoretical guarantees than model-based methods.


To this end, we develop a new bandit-based approach for continuous HPO problems. When modeling HPO as a bandit problem, the hyperparameter configuration corresponds to arm, and the learning output corresponds to loss or reward. In this work, we formulate continuous HPO as a pure-exploration Lipschitz bandit problem, where the search space contains infinite arms, and the aim is to find the arm with minimal loss. We assume that the loss function is Lipschitz. This assumption captures the fact that closer hyperparameters tend to have similar losses, and we take advantage of it to design efficient HPO algorithms.

We propose Batched Lipschitz Exploration (BLiE) algorithm to solve this pure exploration problem. BLiE adaptively learns the landscape of objective function, and automatically assign more budget to promising hyperparameters. To sum up, BLiE has three advantages:

    $\bullet$ BLiE is model-free and only has one continuity assumption. Thus, our theoretical guarantees work for a wide range of problems. As a comparison, \cite{parker2020provably} assume the objective function is almost surely continuously differentiable and its derivative satisfies Lipschitz assumptions for getting the regret bound. However, these conditions are too complex to meet or verify.
    
    $\bullet$ BLiE takes advantage of the Lipschitz continuity of objective function to guide sampling, so it is more likely to find the best hpyerparameter. Theoretically speaking, by simple-regret analysis, we show that performance of BLiE is better than random-search-based methods \citep[e.g.,][]{bergstra2012random, jamieson2016non, li2017hyperband} when the HPO task is hard.
    
    $\bullet$ BLiE is suitable for batched feedback setting, and the decision-making process only needs very few data communications. Therefore, BLiE can naturally work in parallel.

Experimental results from different machine learning tasks show the superior performance of BLiE. Furthermore, we apply BLiE to noise scheduling in diffusion models. The BLiE schedule has competitive sample quality by using very few diffusion steps, and thus significantly improves sampling speed without using additional speeding up techniques.

\section{Related Works}

\textbf{Hyperparameter Optimization:} In recent years, the surging need of hyperparameter optimization algorithms from deep learning has motivated a larger cluster of researches. See \cite{feurer2019hyperparameter} for a recent exposition. To name a few, grid search \citep{wu2011experiments} and random search \citep{bergstra2012random} are now considered two standard benchmark methods. Inspired by biological findings, population-based methods have also been used for HPO tasks \citep{hansen2016cma,jaderberg2017population}. 
Another line of research is the model-based methods. In such methods, a model fitted on past observation is built, and subsequent hyperparameter trials are selected based on this model. Examples include Bayesian optimization algorithms with different surrogate \citep[e.g.,][]{10.5555/3104322.3104451,shahriari2015taking} 
, and tree-based methods \citep{hutter2011sequential,bergstra2011algorithms,10.1145/3412815.3416885}. More recently, HPO methods that explicitly model the training nature of neural networks have been invented. In these methods, obtaining a more accurate test/validation error requires higher training expenses. Such methods include multi-fidelity (Bayesian) optimization \citep{forrester2007multi,kandasamy2017multi,song2019general}, where feedback at finer fidelity are more accurate. Other methods that incorporate training budget include Successive Halving \citep{jamieson2016non}, Hyperband \citep{li2017hyperband}, and BOHB \citep{falkner2018bohb}. In particular, \cite{huang2022improving} designed a special multi-fidelity algorithm for Bayesian optimization. They proposed a special training data collection strategy for getting better estimation in Bayesian models. A better model can lead to better search area of hyperparameters. 

\textbf{Pure Exploration in Multi-Armed Bandits:} Another line of related works is pure exploration bandits, where the goal is to minimize the simple regret, or the gap between the optimal arm and the output one. \cite{bubeck2011pure} gave upper and lower bounds under the stochastic setting. \cite{jamieson2016non} and \cite{li2017hyperband} extended the problem to the non-stochastic setting. \cite{carpentier2015simple} studied pure exploration bandits with infinitely many arms, where the means of arms are drawn from a distribution $F$. \cite{even2006action} and \cite{mannor2004sample} studied a related setting, where the aim is to output an $\varepsilon$-optimal arm using as little budget $T$ as possible.

\textbf{Our advantages:} This paper studies HPO from a Lispchitz bandit perspective, and properly incorporate the training budget considerations into the Lipschitz best arm identification framework. Such HPO algorithms, to the best of our knowledge, have not been covered by existing works. As discussed in the introduction, Lipschitzness can better capture the loss landscape of the hyperparameters than existing setups. In addition, our algorithm is parallelizable, since the training feedback does not need to be frequently collected. Our method leverages virtues of both Lipschitz bandits and batched bandits. See Appendix \ref{app:related} for more related works.

\section{Preliminaries: Pure Exploration Lipschitz Bandits with Batched Feedback}



For continuous HPO tasks, the candidate hyperparameters are gathered into a compact subset of $\R^d$. When modeling HPO as pure exploration bandit problem, the arm set $\mathcal{X}\in\R^d$ corresponds to the set of hyperparameters, and pulling an arm corresponds to training the model.
Assigning budget $n$ to arm $x\in\mathcal{X}$ means training the model with $n$ units of resources (e.g., iterations), after which we receive a loss $\ell(x,n)$. 
Similar to existing bandit-modeling of HPO \citep[e.g.,][]{jamieson2016non, li2017hyperband}, we assume that for any $x$ there exists a \emph{limit loss} $\mu(x)=\lim_{n\to\infty}\ell(x,n)$ and we define the optimal limiting loss as $\mu^*=\min_{x\in\mathcal{X}}\mu(x)$. We also make the following assumption.

\begin{assumption}\label{ass:hoeffding}
For any $x\in\mathcal{X}$ and $n \in \mathbb{N}_+$, the error sequence $\{\ell(x,n)\}_{n=1}^\infty$ is bounded by $|\ell(x,n)-\mu(x)|\leq n^{-\frac{1}{\beta}}$, for some $\beta>0$. 
\end{assumption}

Assumption \ref{ass:hoeffding} assumes a power-law decay of the gap between $\ell (x,n)$ and $\mu (x)$. This assumption resonates with the convergence rate of most gradient-based training algorithms. The goal of a pure-exploration bandit algorithm is to output an arm $\widetilde{x}^*$ with as small optimal gap $\Delta_{\widetilde{x}^*}:=\mu(\widetilde{x}^*)-\mu^*$ as possible. A general form of pure exploration bandits is in Algorithm \ref{alg:gen}.

      \begin{algorithm}[H]
	\caption{Pure Exploration Bandits} 
	\label{alg:gen} 
        \begin{algorithmic}[1]
		\STATE \textbf{Input.} Arm set $\mathcal{X}$; Total budget $T$.
  		\WHILE{remaining budget $T>0$}
		    \STATE Algorithm chooses arm $x$ and budget $n$.
		    \STATE Assign arm $x$ with budget $n$, and receive $\ell(x, n)$; $T\gets T-n$.
		\ENDWHILE
            \STATE Output an arm $\widetilde{x}^*$.
        \end{algorithmic}
      \end{algorithm}



\subsection{Lipschitz Bandits Model}
Now we expound our Lipschitz bandits setting. We would like to take advantage of the continuity of the objective function, as shown in Figure \ref{fig:landscape}. Also, we do not want to introduce parametric models. Therefore, we make the following assumption.

\begin{assumption}\label{ass:lip} 
The limiting loss $\mu(x)$ is $L$-Lipschitz with respect to the metric on $\mathcal{X}$, that is, $|\mu(x_1)-\mu(x_2)|\leq L\cdot\|x_1-x_2\|$, for any $x_1,\;x_2\in\mathcal{X}$.
\end{assumption} 

As discussed in the introduction and above, this assumption captures the behavior of many important hyperparameters. Before moving on to the next part, we put forward the following conventions. 
\begin{remark} 
    \label{rem:doubling} 
    As a convention, we focus on the metric space $ ([0,1]^d, \| \cdot \|_\infty) $.
\end{remark} 
Note that the restriction in Remark \ref{rem:doubling} does not sacrifice generality. By the Assouad's embedding theorem \citep{assouad1983plongements}, the (compact) doubling metric space $ \mathcal{X} $ can be embedded into a Euclidean space with some distortion of the metric; See \cite{pmlr-v119-wang20q} for more discussions in a machine learning context. Due to existence of such embedding, the metric space $ ([0,1]^d, \| \cdot \|_{\infty})  $, where metric balls are hypercubes, is sufficient for the purpose of our paper. For the rest of the paper, we will use hypercubes in algorithm design for simplicity, while our algorithmic idea generalizes to other doubling metric spaces. 



\subsection{Zooming Number and Zooming Dimension}\label{sec:zooming}

We use the zooming number and the zooming dimension \citep{kleinberg2008multi, bubeck2008tree, slivkins2011contextual} in our theoretical analysis. These are important concepts for bandits in metric spaces, and we explain them below.


Define the set of $r$-optimal arms as $ S (r) = \{ x \in \mathcal{X} : \Delta_x \le r \} $. For any $r=2^{-i}$, the decision space $[0,1]^d$ can be equally divided into $2^{di}$ cubes with edge length $r$, which we call \textit{standard cubes} (also referred to as dyadic cubes). 
The $r$-zooming number is defined as 
\begin{equation*}
	N_r := \#\{C: \text{$C$ is a standard cube with edge length $r$ and }\text{$C\subset S((8L+8)r)$}\}.
\end{equation*}
The zooming dimension is then defined as $d_z := \min \{  d\geq0: \exists a>0,\;N_r \le ar^{-d},\;\forall r=2^{-i} \text{ for $i \in \mathbb{N}$}\}$. Moreover, we define the zooming constant $C_z$ as 
$C_z=\min\{  a>0:\;N_r \le ar^{-d_z},\;\forall r=2^{-i} \text{ for $i \in \mathbb{N}$} \}$.

It is obvious that $d_z$ is upper bounded by ambient dimension $d$. In fact, zooming dimension $d_z$ can be significantly smaller than $d$ and can be zero. For a simple example, consider a problem with ambient dimension $d=1$ and expected reward function $\mu(x)=x$ for $0\leq x\leq1$, which satisfies Assumption \ref{ass:lip} with $L=1$. Then for any $r=2^{-i}$ with $i\geq4$, we have $S(16r)=[1-16r,1]$ and $N_r=16$. Therefore, for this problem the zooming dimension equals to $0$, with zooming constant $C_z=16$.

\subsection{Bandit Problems with Batched Feedback} 

The batched bandit problem is a trending topic in multi-armed bandit problems (See Appendix \ref{app:related}). 
In such problems, the observed losses are communicated to the agent in batches, and the decisions made by the algorithm depend only on information up to the previous batch. Algorithms with good performance for batched bandits also have advantages in HPO problems.
Since the policy  does not depend on observations from the same batch, hyperparameters belonging to the same batch can be trained in parallel.

In the bandit language, this feedback collecting scheme is called bandit with batched feedback. Here we define the batched feedback pattern formally. For a $T$-step game, the player determines a grid $\mathcal{T}=\{t_0,\cdots,t_B\}$ \emph{adaptively}, where $0=t_0<t_1<\cdots<t_B=T$ and $B\ll T$. During the game, loss observations are communicated to the player only at the grid points $t_1, \cdots, t_B$. As a consequence, for any time $t$ in the $j$-th batch, that is, $t_{j-1}<t\leq t_j$, the loss generated at time $t$ cannot be observed until time $t_j$, and the decision made at time $t$ depends only on losses up to time $t_{j-1}$. The determination of the grid $\mathcal{T}$ is adaptive in the sense that the player chooses each grid point $t_j\in\mathcal{T}$ based on the operations and observations up to the previous point $t_{j-1}$.

\section{Algorithm} \label{sec:alg}

\begin{wrapfigure}{r}{0.5\textwidth} 
	\centering
	\subfigure{
		\includegraphics[width=2cm]{./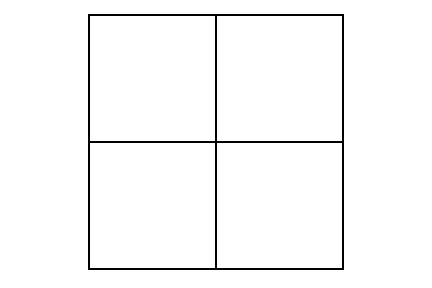}
	}\hspace{-6.75mm}
	\subfigure{
		\includegraphics[width=2cm]{./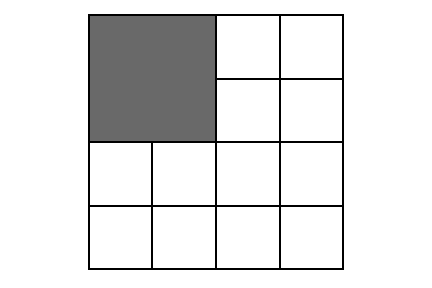}
	}\hspace{-6.75mm}
	\subfigure{
		\includegraphics[width=2cm]{./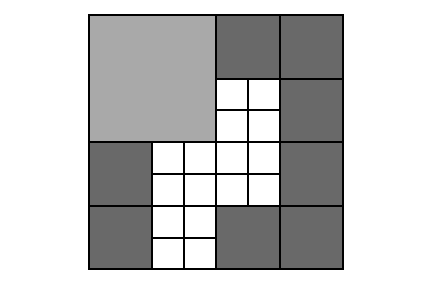}
	}\hspace{-6.75mm}
	\subfigure{
		\includegraphics[width=2cm]{./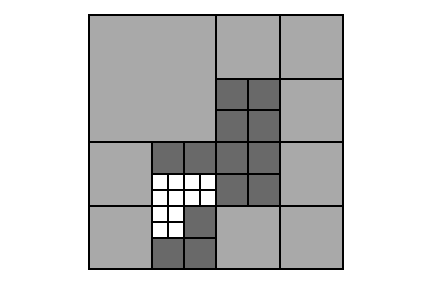}
	}
	\caption{Partition and elimination process of a BLiE run. The $i$-th subfigure shows the pattern before the $i$-th batch. Dark gray cubes are those eliminated in the most recent batch, while the light gray ones are those eliminated in earlier batches.} 
	\label{f:visual}
\end{wrapfigure} 

In this section, we propose Batched Lipschitz Exploration (BLiE) algorithm to solve pure-exploration Lipschitz bandits with batched feedback. The main policies of BLiE are inspired by \cite{fenglipschitz}. However, the novel analysis in this work shows that BLiE can efficiently identify good arms by using very few data communications, and has better performance than uniform search and random-search-based algorithms.


In a batched feedback setting, the agent’s knowledge does not build up within each batch. Therefore, a `uniform’ type algorithm is naturally suitable for such problems. Based on this intuition, BLiE treats decisions made in the same batch equally. More specifically, it works in the following four steps in each batch $m$. 1. Construct a collection $\mathcal{A}_m$ of cubes, where each cube is a subset of $\mathcal{X}$ and has \emph{the same} edge length $r_m$. Assign \emph{the same} budget to all the cubes in $\mathcal{A}_m$; 2. Receive the observed loss of each cube at the end of the batch; 3. Eliminate cubes with high losses; 4. Further partition the remaining cubes to smaller subcubes and collect these subcubes to construct $\mathcal{A}_{m+1}$. The learning process of BLiE is summarized in Algorithm \ref{BLiN}. Moreover, we present a visualization of partition and elimination steps of a real BLiE run in Figure \ref{f:visual}, where the arm space is $[0, 1]^2$.

\begin{algorithm}[htb]
	\caption{Batched Lipschitz Exploration (BLiE)} 
	\label{BLiN} 
	\begin{algorithmic}[1]  
		\STATE \textbf{Input.} Arm set $\mathcal{X}=[0,1]^d$; Total budget $T$; $\alpha$ and $\beta$.
		\STATE \textbf{Initialization} Edge-length sequence $\{r_m\}_{m\in\mathbb{N}^+}$; The first grid point $t_0=0$; Equally partition $\mathcal{X}$ to $r_1^{-d}$ subcubes with edge length $r_1$ and define $\mathcal{A}_{1}$ as the collection of these subcubes.
		
		\FOR{$m=1,2,\cdots$}
			\STATE For each $C\in\mathcal{A}_m$, randomly choose\footnotemark{} an arm $x_C$ and evaluate $x_C$ with budget $n_m={r_m^{-\beta}}$.
			\STATE Receive the loss $\ell(x_C,n_m)$ for each cube $C\in\mathcal{A}_m$.
			Find  $\ell_m^{\min}=\min_{C\in\mathcal{A}_m}\ell(x_C,n_m)$.
			\STATE For each cube $C\in\mathcal{A}_m$, eliminate $C$ if $\ell(x_C,n_m)-\ell_m^{\min}>\alpha r_m$. Let $\mathcal{A}_m^+$ be set of cubes not eliminated.
			\STATE \label{cond:break}Compute $t_{m+1}=t_m+(r_{m}/r_{m+1})^d\cdot|\mathcal{A}_m^+|\cdot n_{m+1}$. If $t_{m+1}\geq T$ then define $\mathcal{X}_c=\{x_C:\;C\in\mathcal{A}_m^+\}$ and \textbf{break}.
			\STATE Equally partition each cube in $\mathcal{A}_m^+$ into $\left(r_m/r_{m+1}\right)^d$ subcubes with edge length $r_{m+1}$ and define $\mathcal{A}_{m+1}$ as the collection of these subcubes. 
		\ENDFOR 
	\STATE Assign budget $n_f$ to each arm $x\in\mathcal{X}_c$ uniformly, so that the total budget is used up. 
	\STATE\textbf{Output}  $\widetilde{x}^*=\mathop{\arg\min}_{x\in\mathcal{X}_C}\ell(x, n_m+n_f)$. 
	\end{algorithmic} 
\end{algorithm}\footnotetext{One can arbitrarily pick $x_C$ from cube $C$. In practice, we pick $x_C$ uniformly at random.}

\section{Theoretical Results}

As mentioned above, there are three theoretical contributions in this work: 1. We provide simple regret upper bound of BLiE algorithm; 2. We show that BLiE requires very few rounds of data communications; 3. We also develop simple regret lower bounds for uniform search and random-search-based algorithms, which demonstrate that BLiE has better performance.

\subsection{Simple Regret Upper Bound of BLiE}\label{sec:log}
This section gives the simple regret upper bound of BLiE with Doubling Edge-length Sequence $r_m=2^{-m}$. Doubling Sequence is also the edge-length sequence used in our experiments.

\begin{theorem}\label{thm:blin-upper} 
If Assumption \ref{ass:hoeffding} and \ref{ass:lip} are satisfied, then output arm $\widetilde{x}^*$ of BLiE algorithm with total budget $T$, edge-length sequence $r_m=2^{-m}$, $\alpha=2L+2$ and $\beta$ satisfies
\begin{equation}\label{upp-simple}
    \Delta_{\widetilde{x}^*}\leq c\cdot T^{-\frac{1}{d_z+\beta}},
\end{equation}
where $d_z$ is the zooming dimension and $c$ is a constant. In addition, BLiE needs no more than $\frac{1}{d_z+\beta}\log T$ batches to achieve this simple regret.
\end{theorem}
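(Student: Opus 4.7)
The plan is to combine a deterministic accuracy guarantee drawn from Assumption~\ref{ass:hoeffding} with a survival-and-counting argument controlled by the zooming number. The backbone of the argument is that setting $n_m = r_m^{-\beta}$ forces $|\ell(x_C, n_m) - \mu(x_C)| \le r_m$ for every $C \in \A_m$, so the ``clean event'' of stochastic Lipschitz bandits holds surely instead of merely with high probability.

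With this in hand, I would first prove by induction on $m$ that the standard cube $C_m^\ast$ containing an optimal arm is never eliminated. Lipschitzness of $\mu$ gives $\mu(x_{C_m^\ast}) \le \mu^\ast + L r_m$, so the precision bound yields $\ell(x_{C_m^\ast}, n_m) \le \mu^\ast + (L+1)r_m$, while $\ell_m^{\min} \ge \mu^\ast - r_m$; hence $\ell(x_{C_m^\ast}, n_m) - \ell_m^{\min} \le (L+2) r_m \le \alpha r_m$ for $\alpha = 2L+2$, confirming survival. Conversely, any surviving cube $C \in \A_m^+$ satisfies $\mu(x) \le \mu^\ast + (4L+4) r_m$ for every $x \in C$ (apply the precision bound to $x_C$, then one more Lipschitz step of diameter $r_m$), so $C \subset S((8L+8) r_m)$, which is exactly the inclusion required by the zooming-number definition. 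Consequently $|\A_m^+| \le N_{r_m} \le C_z r_m^{-d_z}$.

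The third step is budget accounting. Because $\A_{m+1}$ is built by equally subdividing each surviving parent cube, $|\A_{m+1}| = 2^d |\A_m^+| = O(r_{m+1}^{-d_z})$, so the budget consumed in batch $m+1$ is $|\A_{m+1}|\cdot n_{m+1} = O(r_{m+1}^{-(d_z+\beta)}) = O(2^{(m+1)(d_z+\beta)})$. Summing a geometric series, the cumulative cost through batch $M$ is $\Theta(2^{M(d_z+\beta)})$, so the termination condition $t_{M+1} \ge T$ in line~\ref{cond:break} forces $M \le \frac{1}{d_z+\beta}\log T + O(1)$, which simultaneously yields the batch-count claim and $r_M \le c_0 \cdot T^{-1/(d_z+\beta)}$. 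Every representative in $\mathcal{X}_c$ inherits the bound $\mu(x_C) - \mu^\ast \le (4L+4) r_M$ from Step~2, so the output $\widetilde{x}^\ast$ satisfies the claimed inequality~\eqref{upp-simple} regardless of how the residual budget $n_f$ is allocated at the end.

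The main obstacle I anticipate is bookkeeping around the final batch: the algorithm breaks the instant $t_{m+1} \ge T$ rather than when the budget is truly exhausted, so one must verify that the output regret is controlled by the edge length $r_M$ of the last fully executed batch, and not by some smaller $r_{M+1}$ whose batch never ran. This is resolved by observing that $\mathcal{X}_c$ is drawn from $\A_M^+$, so the Step~2 survival bound applies verbatim. A secondary subtlety is that the subdivision factor $2^d$ depends on the ambient dimension rather than $d_z$; this is harmless because $d$ is fixed, so the factor can be folded into the implicit constant $c$, but the write-up must make explicit that $c$ may depend on $L$, $d$, and $C_z$.
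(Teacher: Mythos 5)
Your proposal is, in substance, the same proof the paper gives in Appendices B and C: deterministic precision $|\ell(x_C,n_m)-\mu(x_C)|\le r_m$ from $n_m=r_m^{-\beta}$, survival of the optimal cube under the threshold $\alpha=2L+2$, the bound $\Delta_x\le(4L+4)r_m$ for all $x$ in a surviving cube (the paper's Lemma~\ref{lem:eli}, stated there at scale $r_{m-1}$ for $\mathcal{A}_m$, which is the same statement), the count $|\mathcal{A}_m^+|\le N_{r_m}\le C_z\,r_m^{-d_z}$, and a geometric budget sum played against the break condition in line~\ref{cond:break}. Your handling of the last batch --- the output is drawn from $\mathcal{A}_M^+$, so the survival bound at scale $r_M$ applies regardless of how $n_f$ is spent --- also matches the paper's.

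One step in your budget accounting is stated with the inequality running the wrong way. You claim the cumulative cost through batch $M$ is $\Theta\bigl(2^{M(d_z+\beta)}\bigr)$ and that $t_{M+1}\ge T$ therefore forces $M\le\frac{1}{d_z+\beta}\log T+O(1)$, ``which yields $r_M\le c_0\,T^{-1/(d_z+\beta)}$.'' An upper bound on $M$ gives a \emph{lower} bound on $r_M=2^{-M}$, not an upper bound; moreover the zooming number supplies only the one-sided estimate $t_{M+1}\le\frac{C_z}{1-2^{-(d_z+\beta)}}\,2^{(M+1)(d_z+\beta)}$, with no matching lower bound, since far fewer than $C_z r_m^{-d_z}$ cubes may survive. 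What the break condition actually delivers is $T\le t_{M+1}\le c\, r_{M+1}^{-(d_z+\beta)}$, hence $r_{M+1}\le c_0\,T^{-1/(d_z+\beta)}$, equivalently the \emph{lower} bound $M+1\ge\frac{1}{d_z+\beta}\log T-\log c_0$. That is exactly what the regret bound needs, and it is also how the paper reads the batch-count claim: ``no more than $\frac{1}{d_z+\beta}\log T$ batches are needed to achieve this simple regret'' means the guarantee is attained once that many batches have run, not that the loop provably halts by then. Flip that one direction and your argument coincides with the paper's.
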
 

Our analysis needs the following lemma, which shows that BLiE can gradually identify areas with small simple regret. The proofs of Theorem \ref{thm:blin-upper} and Lemma \ref{lem:eli} are in Appendix \ref{app:thm1} and \ref{app:lem1}.

\begin{lemma}\label{lem:eli}
    For any $m\geq1$ and $x\in \cup_{C\in\mathcal{A}_m}C$, we have $\Delta_x\leq(4L+4)r_{m-1}$.
\end{lemma}

\subsection{Achieving Better Communication Bound}\label{sec:loglog}

The communication bound can be improved without causing worse simple regret. To achieve this, we use the following edge length sequence.

\begin{definition}
    \label{def:ace}
    For a problem with ambient dimension $d$, zooming dimension $d_z$ and time horizon $T$, we denote $c_1=\frac{d_z+\beta-1}{(d_z+\beta)(d+\beta)}\log T$ and $c_{i+1} = {\eta c_i }$ for any $i\ge 1$, where $\eta=\frac{d+1-d_z}{d+\beta}$. Then we let $\alpha_n=\lfloor\sum_{i=1}^nc_i\rfloor$, $\beta_n=\lceil\sum_{i=1}^nc_i\rceil$, and define\footnote{To simplify the notation, in this subsection we assume $\{r_m\}$ is strictly decreasing. See Appendix \ref{app:race} for the version without this assumption.} Appropriate Combined Edge-length Sequence (ACE Sequence) $\{r_m\}_{m\in\mathbb{N}}$ as $r_m=2^{-\alpha_k}$ for $m=2k-1$ and $r_m=2^{-\beta_k}$ for $m=2k$.
\end{definition}
Then we show that BLiE with ACE Sequence can achieve the same simple regret using only $\mathcal{O}(\log\log T)$ batches.

\begin{theorem}\label{thm:ace}

If Assumption \ref{ass:hoeffding} and \ref{ass:lip} are satisfied, then output arm $\widetilde{x}^*$ of BLiE algorithm with total budget $T$, ACE Sequence $\{r_m\}$, $\alpha=2L+2$ and $\beta$ satisfies 
\[\Delta_{\widetilde{x}^*}\leq c\cdot T^{-\frac{1}{d_z+\beta}},\]
where $d_z$ is the zooming dimension and $c$ is a constant. In addition, BLiE needs $\mathcal{O}(\log\log T)$ batches to achieve this simple regret.
\end{theorem}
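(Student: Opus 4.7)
The plan rests on three pillars: (a) Lemma~\ref{lem:eli} transfers verbatim to the ACE sequence, because its proof uses only Assumption~\ref{ass:hoeffding}, the Lipschitz property, and the elimination threshold $\alpha r_{m-1}$---never any specific structure of $\{r_m\}$. Hence $\Delta_{\widetilde{x}^*} \leq (4L+4) r_M$ where $M$ is the terminal batch. (b) Analyze the ACE sequence to show that $r_M$ reaches the target scale $\Theta(T^{-1/(d_z+\beta)})$ after only $M = O(\log\log T)$ batches. (c) Verify that the cumulative budget through such $M$ stays within $T$.

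For (b), the crucial fact is that $1-\eta = (d_z+\beta-1)/(d+\beta)$, which combined with the definition of $c_1$ gives $\sum_{i=1}^\infty c_i = c_1/(1-\eta) = \log T/(d_z+\beta) =: \alpha_\infty$ (in the nontrivial case $d_z+\beta>1$; the other case is immediate). Hence $\alpha_k = \alpha_\infty(1-\eta^k)$ and $r_{2k-1} = 2^{-\alpha_k} = T^{-1/(d_z+\beta)} \cdot 2^{\alpha_\infty \eta^k}$. Choosing $K$ so that $\alpha_\infty \eta^K \leq 1$, i.e. $K = \lceil \log_{1/\eta} \alpha_\infty \rceil = \Theta(\log\log T)$, yields $r_{2K-1} \leq 2\, T^{-1/(d_z+\beta)}$. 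By (a), the output then has regret $O(T^{-1/(d_z+\beta)})$, provided the algorithm is allowed to reach batch $2K-1$.

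For (c), Lemma~\ref{lem:eli} applied at batch $m-1$ places every surviving cube in $S((8L+8)r_{m-1})$, so $|\mathcal{A}_{m-1}^+| \leq N_{r_{m-1}} \leq C_z r_{m-1}^{-d_z}$; combined with the partition step this gives $|\mathcal{A}_m| n_m \leq C_z\, r_{m-1}^{d-d_z}/r_m^{d+\beta}$. Substituting the ACE values and invoking the identity $\eta(d+\beta) = d+1-d_z$ (which is precisely the definition of $\eta$), a direct computation collapses the exponents to $|\mathcal{A}_{2k-1}| n_{2k-1} \leq O(T) \cdot 2^{-\alpha_\infty \eta^{k-1}}$, with an analogous bound for even batches. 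Only the last $O(1)$ terms of $\sum_{k=1}^K 2^{-\alpha_\infty \eta^{k-1}}$ are of order one; the earlier terms are doubly-exponentially small because $\alpha_\infty \eta^{k-1}$ is large for small $k$. Hence the cumulative cost through batch $2K-1$ is $O(T)$, so the algorithm does finish all $2K-1$ batches within the overall budget.

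The main obstacle is this exponent-collapsing computation in (c): because the ACE sequence shrinks $r_m$ aggressively between consecutive odd and even batches, both $|\mathcal{A}_m|$ and $n_m$ blow up, and the telescoping is invisible until the identity $\eta(d+\beta) = d+1-d_z$ is invoked to reduce the two-variable exponent in $r_{m-1}^{d-d_z}/r_m^{d+\beta}$ to a single-variable expression in $\eta^{k-1}$. Carefully tracking the $\pm 1$ floor/ceiling gaps between $\alpha_k$ and $\beta_k$---which hide inside the $O(1)$ constants---is the technically delicate part.
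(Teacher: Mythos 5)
Your architecture matches the paper's: Lemma~\ref{lem:eli} is reused unchanged, the per-batch budget is bounded via $|\mathcal{A}_{m-1}^+|\leq C_z r_{m-1}^{-d_z}$, the exponent collapse through the identity $\eta(d+\beta)=d+1-d_z$ (equivalently, the telescoping $C_m=C_{m-1}$ in the paper) is the same computation, and your communication-bound argument via $\widetilde{r}_{B^*}=T^{-1/(d_z+\beta)}\cdot T^{\eta^{B^*}/(d_z+\beta)}$ is exactly the paper's closing paragraph.

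However, step (c) has a genuine gap. You conclude that because the cumulative budget through batch $2K-1$ is $O(T)$, ``the algorithm does finish all $2K-1$ batches within the overall budget.'' That inference is invalid: $O(T)$ is not $\leq T$, and the hidden constant here contains the zooming constant $C_z$, which can be large (it is $16^d$ in the paper's own example). Indeed your own bound gives $N_{2k-1}\leq C_z T\cdot 2^{-\alpha_\infty\eta^{k-1}}$, so the single batch at which $\alpha_\infty\eta^{k-1}=\Theta(1)$ already costs $\Theta(C_z T)$ by itself; Line 7 of Algorithm~\ref{BLiN} would then terminate the run \emph{before} batch $2K-1$, and your regret bound $\Delta_{\widetilde{x}^*}\leq(4L+4)r_{2K-1}$ no longer follows. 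The paper avoids this by arguing in the opposite direction: at whatever batch $B$ the algorithm actually stops, the termination condition forces $T\leq\sum_m N_m\leq 2C_z T^{(d_z+\beta-1)/(d_z+\beta)}/r_{B-1}+2C_z/r_{B-1}^{d_z+\beta}$, and this inequality alone forces $r_{B-1}\leq c_r T^{-1/(d_z+\beta)}$ with $c_r=\max\{(4C_z)^{1/(d_z+\beta)},4C_z\}$ absorbing the zooming constant. Your forward argument can be repaired --- e.g., take $B$ to be the stopping batch and show that the first unaffordable batch already corresponds to $r=\Theta(T^{-1/(d_z+\beta)})$ --- but that repair is essentially the paper's backward argument, and as written your step (c) does not close the proof.
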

\begin{proof}
We let $B$ be the total number of batches of the BLiE run ($B-1$ batches in the for-loop and $1$ clean-up batch), and $N_m$ be the total budget of batch $m$. In the following analysis, we bound $N_m$ for $m=2k-1$ and $m=2k$ separately, and then obtain the simple regret upper bound.

Firstly, we consider the case $m=2k-1$. For convenience, we let $\widetilde{r}_k=2^{-\sum_{i=1}^kc_i}$, and thus we have $\widetilde{r}_{k-1}\geq r_{m-1}\geq r_m\geq \widetilde{r}_{k}$. Recall that $\mathcal{A}_{m-1}^+$ is set of cubes not eliminated in batch $m-1$, and $\cup_{C\in\mathcal{A}_{m-1}^+}C=\cup_{C^\prime\in\mathcal{A}_m}C^\prime$. Lemma \ref{lem:eli} implies that each cube in $\mathcal{A}_{m-1}^+$ is a subset of $S((4L+4)r_{m-1})$, and thus $\left|\mathcal{A}_{m-1}^+\right|\leq N_{r_{m-1}}\leq C_z\cdot r_{m-1}^{-d_z}$. The total budget of batch $m$ is
\begin{align*}
    N_m=|\mathcal{A}_m|\cdot n_m = \left(\frac{r_{m-1}}{r_m}\right)^d\left|\mathcal{A}_{m-1}^+\right|\cdot n_m
    \leq C_z\cdot\frac{r_{m-1}^{d+1-d_z}\cdot r_m^{-d-\beta}}{r_{m-1}} 
    \leq C_z\cdot\frac{\widetilde{r}_{k-1}^{d+1-d_z}\cdot\widetilde{r}_k^{-d-\beta}}{r_{m-1}}.
\end{align*}
For the numerator, we have
\begin{align*}
    \widetilde{r}_{k-1}^{d+1-d_z}\cdot\widetilde{r}_k^{-d-\beta}
    =2^{-\left(\sum_{i=1}^{k-1}c_i\right)(d+1-d_z)+\left(\sum_{i=1}^{k}c_i\right)(d+\beta)}=2^{\left(\sum_{i=1}^{k-1}c_i\right)(d_z+\beta-1)+c_m(d+\beta)}.
\end{align*}
Define $C_m=\left(\sum_{i=1}^{m-1}c_i\right)(d_z+\beta-1)+c_m(d+\beta)$. Since $c_m=c_{m-1}\cdot\frac{d+1-d_z}{d+\beta}$, calculation shows that $C_m = (\sum_{i=1}^{m-2} c_i) (d_z+\beta-1)  + c_{m-1}(d+\beta)+ c_{m-1}(d_z+\beta-1-d-\beta)+c_m(d+\beta)=C_{m-1}$. Thus for any $m$, we have $C_M=C_1=\frac{d_z+\beta-1}{d_z+\beta}\log T$. Hence,
\begin{equation*}
    N_m\leq C_z\cdot2^{\frac{d_z+\beta-1}{d_z+\beta}\log T}/r_{m-1}=C_z\cdot T^{\frac{d_z+\beta-1}{d_z+\beta}}/r_{m-1}.
\end{equation*}

Secondly, we consider the case $m=2k$. Lemma \ref{lem:eli} implies that each cube in $\mathcal{A}_m$ is a subset of $S((8L+8)r_m)$. Similar argument to the first case shows that $\left|\mathcal{A}_m\right|\leq C_z\cdot r_m^{-d_z}$ and $N_m\leq C_z\cdot r_m^{-(d_z+\beta)}$.

Line 7 of Algorithm \ref{BLiN} ensures that the sum of budgets of full $B$ batches is greater than $T$. Therefore, combining the above two cases, we have
\begin{align}\label{upp-budget-ace}
    T\leq&\sum_{\substack{m=2k-1,\;m\leq B }}C_z \cdot\frac{T^{\frac{d_z+\beta-1}{d_z+\beta}}}{r_{m-1}}+\sum_{\substack{m=2k,\;m\leq B}}C_z\cdot \frac{1}{r_m^{d_z+\beta}}.
\end{align}

The rounding step in Definition \ref{def:ace} yields that $r_m\leq\frac{r_{m-1}}{2}$ for any $m$. If $B$ is odd, from (\ref{upp-budget-ace}) we have $T\leq2C_z\cdot T^{\frac{d_z+\beta-1}{d_z+\beta}}\cdot\frac{1}{r_{B-1}}+2C_z\cdot\frac{1}{r_{B-1}^{d_z+\beta}}$.  We set $c_r=\max\{(4C_z)^\frac{1}{d_z+\beta}, 4C_z\}$, then $\frac{1}{c_r^{d_z+\beta}}+\frac{1}{c_r}\leq\frac{1}{2C_z}$ and the above inequality implies that $r_{B-1}\leq c_r\cdot T^{-\frac{1}{d_z+\beta}}$. Lemma \ref{lem:eli} shows that $\Delta_x\leq(4L+4)r_{B-1}$ for any $x\in\cup_{C\in\mathcal{A}_B}C$, so we have $\Delta_{\widetilde{x}^*}\leq c\cdot T^{-\frac{1}{d_z+\beta}}$, where $c=(8L+8)\cdot\max\{c_r, 1\}$. If $B$ is even, similar arguments also yield that $\Delta_{\widetilde{x}^*}\leq c\cdot T^{-\frac{1}{d_z+\beta}}$. See Appendix \ref{app:race} for details.

Finally, we consider the communication bound. For any $B^*$, $\widetilde{r}_{B^*}=2^{-\sum_{i=1}^{B^*}c_i}=2^{-c_1\cdot\frac{1-\eta^{B^*}}{1-\eta}}=T^{-\frac{1}{d_z+\beta}}\cdot T^{\frac{\eta^{B^*}}{d_z+\beta}}$. Then by choosing $B^*\geq\frac{\log\log T-\log(d_z+\beta)}{\log\frac{d+\beta}{d+1-d_z}}$, we have $\frac{\eta^{B^*}}{d_z+\beta}\leq\log T$ and $\widetilde{r}_{B^*}\leq T^{-\frac{1}{d_z+\beta}}$. Definition \ref{def:ace} shows that $r_m<\widetilde{r}_{B^*}$ for any $m\geq2B^*$. Thus, no more than $\hat{B}=\frac{2\log\log T}{\log\frac{d+\beta}{d+1-d_z}} + 1$ batches are needed to achieve $r_{\hat{B}-1}\leq T^{-\frac{1}{d_z+\beta}}$ and $\Delta_{\widetilde{x}^*}\leq (4L+4)T^{-\frac{1}{d_z+\beta}}\leq c\cdot T^{-\frac{1}{d_z+\beta}}$.
\end{proof}

\subsection{Lower Bound for Uniform Search}
Now we derive the theoretical performance of uniform search strategy for pure-exploration Lipschitz bandits. The following theorem provides lower bound of the resulting optimal gap for uniform search. Recall that simple regret upper bound of BLiE is $\mathcal{O}\left(T^{-1/({d_z+\beta})}\right)$, this result yields that theoretical performance of uniform search is worse than BLiE. Pseudocode of uniform search strategy and proof of Theorem \ref{thm:lower-us} are presented in Appendix \ref{app:us}.

\begin{theorem}\label{thm:lower-us}
For any total budget $T$, dimension $d$ and grid length $r$, there exists an instance with $d_z=0$ such that the uniform search strategy returns an arm $\widetilde{x}^*$ with optimal gap $\E\Delta_{\widetilde{x}^*}\geq \frac{1}{2}T^{-\frac{1}{d+\beta}}$.
\end{theorem}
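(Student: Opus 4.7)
The plan is to construct, for each choice of $T$, $d$, and grid length $r$, a cone-shaped Lipschitz instance with zooming dimension $d_z = 0$ that forces uniform search to output an arm with expected optimal gap at least $\tfrac{1}{2} T^{-1/(d+\beta)}$. Concretely, I would use losses of the form $\mu_{x^*}(x) = L \|x - x^*\|_\infty$, which are trivially $L$-Lipschitz. For such a cone, $S(s) = \{x : L \|x - x^*\|_\infty \le s\}$ is a hypercube of edge $2s/L$, so for every dyadic $r$ the number of standard cubes of edge $r$ contained in $S((8L+8)r)$ is bounded by a constant depending only on $L$ and $d$, confirming $d_z = 0$. I take uniform search to have its natural form (to be matched to the pseudocode in Appendix \ref{app:us}): partition $[0,1]^d$ into $r^{-d}$ standard cubes, draw one arm $X_i$ uniformly from each, assign identical budget $n = T r^d$ per arm, and return the sampled arm with the smallest observed loss.

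The analysis then splits at the threshold $r_0 = T^{-1/(d+\beta)}$. For $r \ge r_0$ (the grid-resolution regime), I would place $x^*$ at the origin so that exactly one sampled arm, namely the one drawn uniformly from the cube $[0,r]^d$, can be at $\ell^\infty$-distance less than $r$ from $x^*$; that arm $X_1$ satisfies $\mathbb{E}\|X_1\|_\infty = \tfrac{d}{d+1} r \ge r/2$, while every other sampled arm lies in a cube disjoint from $[0,r)^d$ and so is at $\ell^\infty$-distance at least $r$ from $x^*$. Since the output is a sampled arm, $\mathbb{E}\Delta_{\widetilde{x}^*} \ge \mathbb{E}\min_i \mu(X_i) = L\,\mathbb{E}\|X_1\|_\infty \ge Lr/2 \ge \tfrac{L}{2} T^{-1/(d+\beta)}$, which meets the target for $L \ge 1$.

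For $r < r_0$ (the noise regime), the per-arm budget $n = Tr^d$ satisfies $n^{-1/\beta} > T^{-1/(d+\beta)}$, which I would exploit via a standard two-hypothesis indistinguishability argument. Take two cone instances $\mu_0 = \mu_{x_0^*}$ and $\mu_1 = \mu_{x_1^*}$ with $\|x_0^* - x_1^*\|_\infty = 2 n^{-1/\beta}/L$. Then $|\mu_0(x) - \mu_1(x)| \le L\|x_0^* - x_1^*\|_\infty = 2 n^{-1/\beta}$ pointwise, so the oblivious adversarial loss sequence $\ell(x, n') = \tfrac{1}{2}(\mu_0(x) + \mu_1(x))$ is admissible under Assumption \ref{ass:hoeffding} for \emph{both} instances. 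The output $\widetilde{x}^*$ is therefore identically distributed under both instances, yet the triangle inequality gives $\max_{i \in \{0,1\}} L\|\widetilde{x}^* - x_i^*\|_\infty \ge \tfrac{L}{2}\|x_0^* - x_1^*\|_\infty = n^{-1/\beta}$, so on at least one of the two instances $\mathbb{E}\Delta_{\widetilde{x}^*} \ge n^{-1/\beta} \ge T^{-1/(d+\beta)} \ge \tfrac{1}{2} T^{-1/(d+\beta)}$.

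The main obstacles are cosmetic rather than structural: threading the constants (in particular choosing $L \ge 1$) so that the final bound is exactly $\tfrac{1}{2} T^{-1/(d+\beta)}$ rather than a multiple of it, and handling boundary issues when $1/r$ is not an integer or when the cone's minimum sits on the boundary of $[0,1]^d$. Neither affects the asymptotic order, and the two-regime decomposition yields the stated lower bound uniformly in $r$.
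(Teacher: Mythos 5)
Your construction and the split at $r_0 = T^{-1/(d+\beta)}$ mirror the paper's proof, and your first regime is essentially identical to it (noiseless losses $\ell(x,n)=\mu(x)$ for a cone centered at the origin, with the closest sampled arm at expected $\ell^\infty$-distance at least $r/2$). The second regime, however, is a genuinely different argument. The paper stays on a \emph{single} instance and has the adversary sign-flip the permitted error $n^{-1/\beta}$: losses are biased \emph{up} by $n^{-1/\beta}$ on the cubes inside the near-optimal region $[0,g_{k_0}]^d$ and \emph{down} outside it, so that the observed minimizer provably lands in a region where $\Delta_x \ge \tfrac{1}{2}T^{-1/(d+\beta)}$. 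You instead run a Le Cam-style two-point indistinguishability argument with two cones whose minima are $2n^{-1/\beta}/L$ apart, served the common midpoint loss. Your version is arguably more robust — it applies verbatim to any rule that outputs a sampled arm as a function of the observed losses, not just the argmin rule of Algorithm \ref{uni-exploration} — whereas the paper's explicit construction pins down a single named instance and avoids any averaging over hypotheses. One small slip in your write-up: from ``the output is identically distributed under both instances and $\max_{i}\Delta^{(i)}_{\widetilde{x}^*}\ge n^{-1/\beta}$ pointwise'' you may only conclude $\E_0\Delta^{(0)}+\E_1\Delta^{(1)}\ge n^{-1/\beta}$, hence $\E\Delta_{\widetilde{x}^*}\ge \tfrac{1}{2}n^{-1/\beta}$ on at least one instance, not $\ge n^{-1/\beta}$; since $n = Tr^d < T^{\beta/(d+\beta)}$ gives $n^{-1/\beta} > T^{-1/(d+\beta)}$, the halved bound still yields the stated $\tfrac{1}{2}T^{-1/(d+\beta)}$. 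You should also define $\ell(x,n')$ for budgets $n'\ne n$ consistently with Assumption \ref{ass:hoeffding} (e.g., clip the midpoint perturbation at $\pm (n')^{-1/\beta}$), since the assumption is imposed for all $n'$ even though uniform search only queries budget $n$; this is cosmetic because the two loss functions still coincide at the queried budget.
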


\subsection{Lower Bound for Random-Search-Based Algorithms}

The following theorem provides lower bound of optimal gap for random search strategy, that is, randomly sample $N$ arms and recommend one of them according to a certain policy. Pseudocode of random search strategy and proof of Theorem \ref{thm:lower-hb} are presented in Appendix \ref{app:lower-hb}.



\begin{theorem}\label{thm:lower-hb}
For any total budget $T$, dimension $d$, zooming dimension $d_z$ and number of selected arms $N$, there exists an instance such that any random-search-based algorithm returns an arm $\widetilde{x}^*$ with optimal gap $\E\Delta_{\widetilde{x}^*}\geq c\cdot T^{-\frac{1}{d-d_z}}$, where $c$ is a constant.
\end{theorem}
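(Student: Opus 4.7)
The plan is to exhibit a single hard Lipschitz instance whose near-optimal set has polynomially small volume, then argue that $N \le T$ uniformly random samples miss this set with constant probability. Since any random-search-based algorithm (Algorithm \ref{random-search}) must output some element of its sampled set $\mathcal{X}_s=\{x_i\}_{i=1}^N$, regardless of the policy in line 3, it suffices to lower-bound the quantity $\E\bigl[\min_{i \le N}\Delta_{x_i}\bigr]$.

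For the instance I would take $\mu(x) = L \cdot \|(x_{d_z+1},\dots,x_d)\|_\infty$ on $\mathcal{X}=[0,1]^d$. This is $L$-Lipschitz with $\mu^* = 0$, and the set $S((8L+8)r)$ is the slab on which the last $d-d_z$ coordinates have $\ell_\infty$-norm at most $(8L+8)r/L$. A direct count of standard cubes contained in this slab, where the first $d_z$ coordinates range freely over all dyadic indices while the last $d-d_z$ are restricted to $O(1)$ indices each, yields $N_r = \Theta(r^{-d_z})$, so the zooming dimension is exactly $d_z$. Moreover, for $r \le L$, $\Pr_{X \sim \mathrm{Unif}([0,1]^d)}(\Delta_X \le r) = \Vol(S(r)) = (r/L)^{d-d_z}$.

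Writing $Y_i := \Delta_{x_i}$, the $Y_i$ are i.i.d.\ with $\Pr(Y_i \le r) = (r/L)^{d-d_z}$. The tail-integral identity combined with the substitution $u=(r/L)^{d-d_z}$ gives
\begin{align*}
\E\bigl[\min_{i \le N} Y_i\bigr] \;=\; \int_0^L \bigl(1 - (r/L)^{d-d_z}\bigr)^N dr \;=\; \frac{L}{d-d_z}\, B\!\left(\tfrac{1}{d-d_z},\, N+1\right).
\end{align*}
Standard Beta-function asymptotics yield $B(1/(d-d_z),N+1) = \Theta(N^{-1/(d-d_z)})$, hence $\E[\min_i Y_i] \ge c' N^{-1/(d-d_z)} \ge c' T^{-1/(d-d_z)}$ because $N \le T$. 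Together with the reduction in the first paragraph this produces the stated bound with $c$ depending only on $L$, $d$, and $d_z$.

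The main obstacle is certifying that the slab instance truly has zooming dimension exactly $d_z$---neither smaller (which would leave a mismatch with the theorem's quantifier over $d_z$) nor larger (which would violate the hypothesis). The $\ell_\infty$-cube structure of the metric is what makes both inequalities tight and independent of $L$ up to constants, so this is a careful bookkeeping exercise rather than a conceptual difficulty. Everything else is a textbook Beta-integral computation plus the observation that random-search-based algorithms cannot improve upon the best arm they actually sampled.
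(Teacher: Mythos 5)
Your proof is correct in substance and rests on the same core idea as the paper's: the $\varepsilon$-optimal set of an instance with zooming dimension $d_z$ has Lebesgue measure $O(\varepsilon^{d-d_z})$, so $N\le T$ uniform samples miss the $cT^{-1/(d-d_z)}$-optimal set with constant probability, and no post-hoc selection policy can output anything better than the best sampled arm. The execution differs in two ways. First, the paper does not construct an explicit instance at all: it takes an arbitrary instance of zooming dimension $d_z$ and uses only the covering bound $\m(S(\varepsilon))\le C_z\bigl(\tfrac{\varepsilon}{8L+8}\bigr)^{d-d_z}$ implied by the definition of the zooming number, then finishes with the crude but sufficient estimate $\Pr(\text{all $N$ samples miss})\ge(1-\tfrac1T)^T\ge\tfrac14$. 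Your route instead fixes the slab instance $\mu(x)=L\|(x_{d_z+1},\dots,x_d)\|_\infty$, where $\Pr(\Delta_X\le r)=(r/L)^{d-d_z}$ exactly, and evaluates $\E[\min_i\Delta_{x_i}]$ in closed form via the Beta integral; this is cleaner and gives matching constants, at the cost of the extra bookkeeping you acknowledge for verifying $N_r=\Theta(r^{-d_z})$. Second, and this is the one point you should patch: your slab construction only realizes integer values of $d_z$, whereas the theorem quantifies over arbitrary zooming dimensions (the paper itself exhibits $d_z=8/3$ in its toy example). For non-integer $d_z$ replace the slab by $\mu(x)=L'\|x\|_\infty^{\gamma}$ with $\gamma=d/(d-d_z)$, which is Lipschitz on $[0,1]^d$, has $\Vol(S(r))=\Theta(r^{d-d_z})$, and has zooming dimension exactly $d_z$; the rest of your Beta-integral argument goes through verbatim. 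With that substitution (or by retreating to the paper's instance-free covering argument) the proof is complete.
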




In Section \ref{sec:log} and \ref{sec:loglog}, we show that the simple regret upper bound of BLiE is of order $\mathcal{O}\left(T^{-1/({d_z+\beta})}\right)$. Therefore, when 
$d_z\leq\frac{d-\beta}{2}$, BLiE outperforms random-search-based algorithms. 
Smaller $d_z$ means that the near-optimal region is smaller, and thus finding a sufficient good arm is harder. Consequently, based on concepts from Lipschitz bandits, we show that BLiE outperforms random search when the problem is hard. Note that the above bound is valid for any random-search-based algorithm, thus including Hyperband. In the next subsection, 
we further make a comparison of the two algorithms and explain the reasons for the superiority of BLiE.

\subsection{Comparison with Hyperband under Lipschitz Bandits Setting}

\citet{li2017hyperband} parameterize the CDF of $\mu(x)$ as $F(\nu)\simeq(\nu-\mu^*)^\gamma$. Now we calculate $\gamma$ under the Lipschitz bandits setting. For any $r>0$, we let $\nu=\mu^*+r$. Then $F(\nu)=\frac{\m(\{x:\;\mu(x)-\mu^*\leq r\})}{\m([0, 1]^d)}=\m(\{x:\Delta_x\leq r\})$, where $\m(A)$ denotes the  measure of set $A$. From the definition of zooming number, we know the set $\{x:\Delta_x\leq r\}$ is packed by $N_{\frac{r}{8L+8}}\leq(8L+8)^{d_z}C_z\cdot r^{-d_z}$ cubes with edge length $\frac{r}{8L+8}$.
If this inequality is tight, then we have $\m(\{x:\Delta_x\leq r\})\approx N_{\frac{r}{8L+8}}\cdot \left(\frac{r}{8L+8}\right)^d\approx c\cdot r^{d-d_z}$ and $F(\nu)\approx c\cdot r^{d-d_z}=c\cdot(\nu-\mu^*)^{d-d_z}$, where $c$ is a constant. Thus, we obtain an approximate correspondence $\gamma=d-d_z$. Then Theorem 5 in \cite{li2017hyperband} shows that the output arm of Hyperband satisfies $\Delta_{\widetilde{x}^*}\leq T^{\max\left\{-\frac{1}{d-d_z},-\frac{1}{\beta}\right\}}$.

Indeed, theoretical success of Hyperband heavily relies on hitting a good arm in the random sample procedure of a certain SuccessiveHalving subroutine. When the near-optimal region is small, the simple regret bound of Hyperband may get worse or even break.
As a comparison, BLiE only needs to identify and eliminate the sub-optimal region. Thus theoretically, BLiE outperforms Hyperband at least in the following two aspects: 1. As is shown above, when the zooming dimension is small (or equivalently, the near-optimal region is small), the output optimal gap of BLiE is better than Hyperband; 2. BLiE only needs an upper bound of the volume of the near-optimal region. More specifically, upper bound (\ref{upp-simple}) holds when covering number of near-optimal region $N_r$ is upper bounded by $r^{-d_z}$. As a comparison, Hyperband needs an additional assumption that $F(\nu)\gtrsim(\nu-\mu^*)^\gamma$ to ensure the near optimal-region is not too small.

\section{Experiments}

This section provides empirical comparison of BLiE with existing HPO methods including Hyperband (HB), SuccessiveHalving (SH), Random Search (RS) and Tree-structured Parzen Estimator (TPE). The results demonstrate the superior performance of BLiE. Also, we apply BLiE to noise scheduling task of diffusion models. Compared with the standard linear schedule, BLiE schedule only needs very few diffusion steps, and thus greatly improves the sampling speed. Results of toy example are averaged over $256$ runs, and results in Section \ref{sec:exp1} and \ref{sec:exp2} are averaged over $32$ runs.

\begin{figure*}[ht]
    \centering
	\subfigure[Toy Example]{
        \includegraphics[width=4cm]{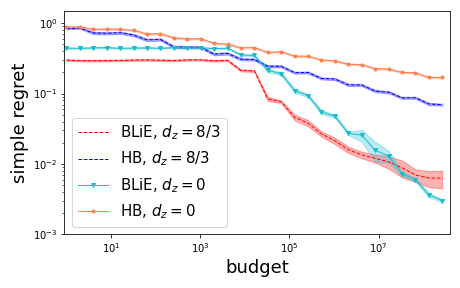}
		\label{fig:toy}
	} 
	\subfigure[MNIST]{
		\includegraphics[width=4cm]{./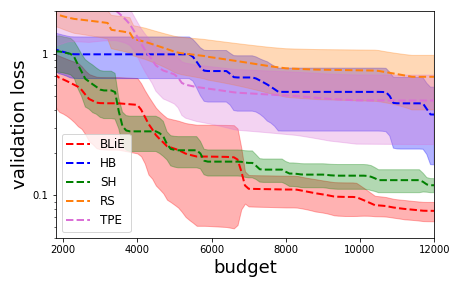}
		\label{fig:mnist}
	} 
	\subfigure[CIFAR-10]{
		\includegraphics[width=4cm]{./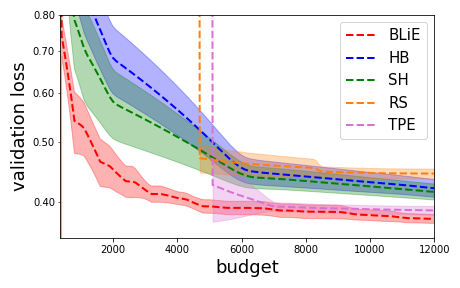}
		\label{fig:cifar}
	}
    \caption{HPO processes for different tasks. Figure \ref{fig:toy} shows results of toy example with different limit losses. Figure \ref{fig:mnist} and \ref{fig:cifar} show results of tuning optimizer for neural-network classifiers on MNIST and CIFAR-10.}
    \label{fig:adam}
\end{figure*}

\subsection{Toy Example}


In this experiment, we investigate the performance of BLiE in a high-dimensional toy example. We also run Hyperband (HB) as a comparison. The arm space is $\mathcal{X}=[0, 1]^8$. In order to compare the two algorithms under different $d_z$, we consider two different limit loss functions. Given an arm $x\in\mathcal{X}$, we define $\mu_1(x)=\|x\|_\infty$ ($d_z=0$) and $\mu_2(x)=\|x\|_\infty^{1.5}$ ($d_z=8/3$). For any limit loss $\mu$, assign $1$ budget to arm $x$ corresponds to sample a Gaussian random variable $Y_{x,i}$ with mean equals to $\mu(x)$, and $\ell(x,n)=\frac{\sum_{i=1}^nY_{x, i}}{n}$. Consequently, Assumption \ref{ass:lip} is satisfied with $L=1$, and Assumption \ref{ass:hoeffding} is asymptotically satisfied with $\beta=2$. 

We run BLiE and HB with total budget $T=2^{28}$, and report the simple regret $\Delta_{\widetilde{x}^*}=\mu(\widetilde{x}^*)-\mu(x^*)$ in Figure \ref{fig:toy}.
We have two observations from Figure \ref{fig:toy}. First, for both limit losses, the simple regrets of BLiE (red and light blue lines) are smaller than HB (orange and dark blue lines). Second, final performance of BLiE on $\mu_1$ (light blue line) is better than $\mu_2$ (red line), while performance of HB on $\mu_2$ (dark blue line) is better than $\mu_1$ (orange line). 
We prove in Theorem \ref{thm:blin-upper} that the simple regret upper bound of BLiE is $\mathcal{O}\left(T^{-1/({d_z+\beta})}\right)$, which means that BLiE benefits from smaller $d_z$. On the other hand, we prove in Theorem \ref{thm:lower-hb} that the simple regret lower bound of HB is $\Omega\left(T^{-1/({d-d_z})}\right)$, which means that HB suffers from smaller $d_z$. In this experiment, $\mu_1$ has $d_z=0$ and $\mu_2$ has $d_z=8/3$. These results match our theoretical analysis and show that our simple regret bounds are tight.

\subsection{Tuning Optimizer for Neural Networks}\label{sec:exp1}

\begin{table*}[th]
    \centering
    \caption{{Test accuracy of classification tasks. The models are trained with hyperparameters output by the five methods.}}
    \begin{tabular}{ccccccc}
         \toprule
         \multicolumn{2}{c}{method} & BLiE & HB & SH & RS & TPE\\
         \midrule
         \multirow{2}*{Acc (std)} & MNIST & \textbf{96.3} (0.4) & 95.3 (0.6) & 95.7 (0.7) & 95.2 (0.5) & 94.8 (0.6)\\
         \cmidrule{2-7}
          & CIFAR-10 & \textbf{91.2} (0.1) & 87.7 (3.0) & 88.3 (2.7) & 87.2 (2.6) & 90.4 (0.2)\\
         \bottomrule
    \end{tabular}
    \label{tab:accuracy}
\end{table*}

In this experiment, we apply BLiE to tune the Adam Optimizer \citep{kingma2014adam} for two classification tasks. The hyperparameter set consists of learning rate $lr$, weights $\beta_1$ and $\beta_2$. We take experiments on two datasets: MNIST and CIFAR-10. For MNIST, one unit of resource corresponds to one mini-batch training. For CIFAR-10, one unit of resource corresponds to $60$ mini-batch training.
We set the parameters of BLiE as $\alpha = 0.01$ and $\beta=2.5$ for all experiments in Section \ref{sec:exp1} and Section \ref{sec:exp2}.

For the model architecture, we use a two-layer CNN in the MNIST task, and Resnet18 \citep{he2016deep} in the CIFAR-10 task. We choose relatively simple models because our purpose is to compare the performance of HPO algorithms rather than obtain state-of-the-art accuracy, and this can save computational resources. We run BLiE, HB, SH, RS and TPE with total budget $12000$, and report the results in Figure \ref{fig:adam}. The results show that BLiE can not only find good hyperparameters faster, but also output better solutions at the end. We use the found hyperparameters to train both models, and report the final test accuracy in Table \ref{tab:accuracy}. This result shows that the model trained with hyperparameters output by BLiE has the best accuracy.


\subsection{Improving Noise Schedule of Diffusion Models}\label{sec:exp2}

Diffusion probabilistic models (DPM) \citep{Ho2020DenoisingDP,Song2020ScoreBasedGM,Dhariwal2021DiffusionMB} is a powerful family of generative models. DPMs have achieved state-of-the-art performance on various applications including image generation \citep{Rombach2021HighResolutionIS}, audios or videos generation \citep{Kong2020DiffWaveAV,Ho2022VideoDM}, drug designs \citep{Hoogeboom2022EquivariantDF}, and so on. 

Following \cite{Song2020ScoreBasedGM}, 
a diffusion model diffuses data distribution with a forward diffusion SDE and generates samples with a reverse SDE. Moreover, a neural score network is used to approximate marginal score functions of forward diffusion, which are needed in the reverse SDE.
The Variance Preserving (VP) forward diffusion \citep{Song2020ScoreBasedGM} is favored as an Ornstein--Uhlenbeck-type diffusion SDE across the literature, which takes the form $dX_t = -\frac{1}{2}\beta(t)X_tdt  + \frac{1}{2}\sqrt{\beta(t)}dW_t$.
Under certain conditions on $\beta (t)$, such diffusion enables arbitrary initial distribution to converge to multivariate Gaussian with sufficiently large $t$. Here $\beta(t)>0$ is called noise schedule of VP forward diffusion. The noise schedule is an important functional hyperparameter of VP diffusion which can influence both the learning efficiency and generative performance of DPMs. The pioneering work \citep{Ho2020DenoisingDP} proposed a linear learning schedule, for which the noise schedule $\beta(t)$ grows linearly with a start $\beta(0)=10^{-4}$ and an end $\beta(1)=2\times 10^{-2}$ with $t\in [0,1]$. 

\begin{figure*}[ht]
	\centering
	\subfigure[T=1000]{
		\includegraphics[width=4cm]{./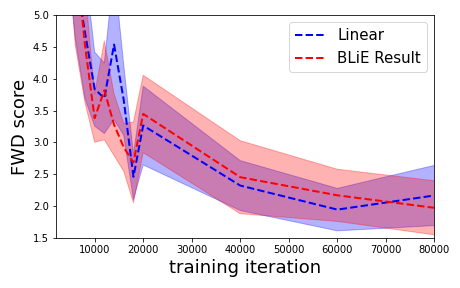}
		\label{f:fid-1000}
	}
	\subfigure[T=200]{
		\includegraphics[width=4cm]{./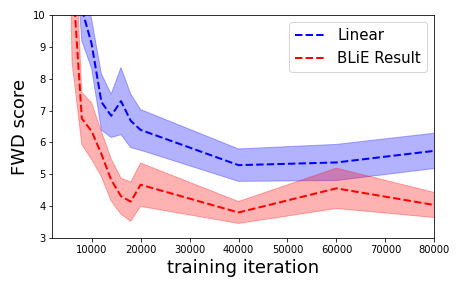}
		\label{f:fid-200}
	}
	\subfigure[T=100]{
		\includegraphics[width=4cm]{./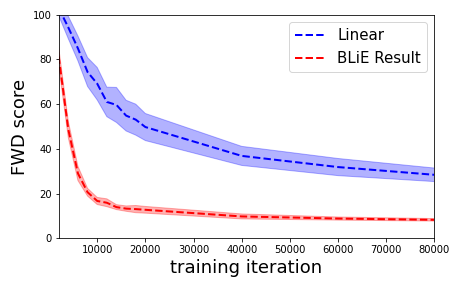}
		\label{f:fid-100}
	}
	\caption{FWD score of DDPM with different diffusion steps $T$.} 
	\label{f:fid}
\end{figure*}

In spite of impressive generative performance on various forms of data, the major drawback of DPMs is the slow sampling speed compared to other generative models such as GANs, VAEs, or Normalizing Flows. Usually, more than 1k diffusion steps are needed for the best performance of a diffusion model, and thus the same number of neural function evaluations (NFEs) are needed in sampling. Advanced simulation techniques of sampling SDE or ODE in order to reduce the NFEs of DPMs were intensively studied in recent works \citep{Karras2022ElucidatingTD,Bao2022AnalyticDPMAA}.
However, the state-of-the-art sampling technique still needs more than 30+ NFEs to achieve competitive generative performance on image datasets in terms of Fretchet Inception Distance (FID) \citep{Heusel2017GANsTB}.






In this experiment, we use BLiE to search for noise schedules with fewer diffusion steps $T$. Our result shows that by using only $T=100$ or $200$ diffusion steps, BLiE schedule achieves competitive sample quality with a linear schedule using $T=1000$ diffusion steps. It means that the sampling speed can be greatly improved without using any additional speeding-up technique. Recent works point out that DPMs' forward diffusion can be roughly divided into three stages \citep{deja2022analyzing}.
Based on such results, we consider searching a three-stage piece-wise linear schedule and let the two knots be the hyperparameters. More precisely, we search for four hyperparameters $t_a$, $\beta_a$, $t_b$, $\beta_b$ such that $0<t_a<t_b<1$ and $10^{-4}<\beta_a<\beta_b<2\times10^{-2}$, and the corresponding noise schedule $\beta(t)$ is a piece-wise linear function with start 
point $\beta(0)=10^{-4}$, end point $\beta(1)=2\times10^{-2}$, and knots $\beta(t_a)=\beta_a$, $\beta(t_b)=\beta_b$.

\begin{wrapfigure}{r}{0.5\textwidth}
    \centering
    \includegraphics[width=5cm]{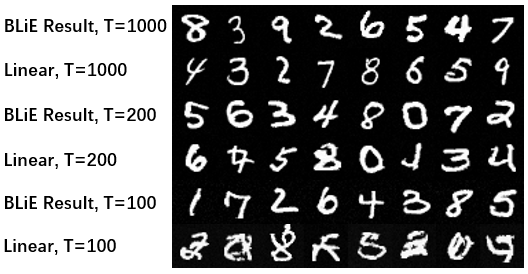}
    \caption{MNIST samples generated using different noise schedules and diffusion steps.}
    \label{fig:mnist_photo}
\end{wrapfigure}   

The experiment aims to demonstrate the feasibility of the proposed HPO algorithm for improving diffusion models, so we conduct the diffusion models experiments on the MNIST dataset to save computational costs. Because the MNIST dataset is grayscale, we mimic the calculation of FID on colored images by replacing the 
inception-v3 model with our pre-trained wide-resnet \citep{Zagoruyko2016WideRN} on MNIST, so we named this metric the Frechet Wideresnset Distance (FWD). 
Figure \ref{f:fid} presents the FWD curves along training iterations for 1000 (default), 200, and 100 diffusion steps respectively. As shown in the figure, red curves represent BLiE schedule, and blue curves represent the default linear schedule. 
The fewer diffusion steps we take, the more advantageous the BLiE schedule is compared to the default schedule. Samples generated using different noise schedules and diffusion steps are shown in Figure \ref{fig:mnist_photo}. This figure also shows that sample quality of BLiE schedule is better when $T$ is small. 
Also, we find that the marginal distribution of VP diffusion with BLiE schedule converges to Gaussian distribution more rapidly than that with default schedule. This finding indicates that BLiE finds a more efficient noise schedule with constraints on diffusion steps. Our results may provide helpful insights on the design of forward diffusions to human experts.

\section{Conclusion}
In this paper, we focus on continuous HPO problem. We formulate this problem as pure-exploration Lipschitz bandits, and propose BLiE as a solution. BLiE has three advantages: 1. BLiE has theoretical guarantees based only on a continuous assumption; 2. BLiE takes advantage of the continuity of the objective function to guide sampling; 3. BLiE can naturally work in parallel. Our empirical results demonstrate the superior performance of BLiE. We also apply BLiE to search noise schedule for diffusion model. Compared with standard linear schedule, BLiE schedule greatly improves the sampling speed without using additional techniques.

\newpage

\bibliography{pure-exploration-ref}
\bibliographystyle{icml2023}


\appendix

\section{Additional Related Works}
\label{app:related}

\textbf{Lipschitz Bandits:} The Lipschitz bandit problem was introduced as ``continuum-armed bandits'' \citep{agrawal1995continuum}, where the arm space is a compact interval. For this problem, \cite{kleinberg2005nearly} proved an $\Omega (T^{2/3})$ lower bound and introduced an algorithm that matches this lower bound. Under extra conditions on top of Lipschitzness, regret rate of $\widetilde{\mathcal{O}} (T^{1/2})$ was achieved \citep{auer2007improved, cope2009regret}. For compact doubling metric spaces, the Zooming bandit algorithm \citep{kleinberg2008multi}, the Hierarchical Optimistic Optimization (HOO) algorithm \citep{bubeck2011x}, and the BLiN algorithm \citep{fenglipschitz} were developed. Additionally, some attention has been focused on Lipschitz bandit problems where extra conditions are imposed. To name a few, \cite{bubeck2011lipschitz} study Lipschitz bandits for differentiable rewards, which enables algorithms to run without explicitly knowing the Lipschitz constants. \cite{10.1145/3412815.3416885} studied discretization-based Lipschitz bandit algorithms from a Gaussian process perspective. \cite{magureanu2014lipschitz} derive a new concentration inequality and study discrete Lipschitz bandits. The idea of robust mean estimators \citep{bickel1965some, alon1999space, bubeck2013bandits} was applied to the Lipschitz bandit problem to cope with heavy-tail rewards, leading to the development of a near-optimal algorithm for Lipschitz bandit with heavy-tailed rewards \citep{lu2019optimal}. Lipschitz bandits where a clustering is used to infer the underlying metric, has been studied by \cite{christina2019nonparametric}. Contextual Lipschitz bandits have also been studied \citep{slivkins2011contextual,krishnamurthy2019contextual}.

\textbf{Batched Bandits:} Urged by the recent prevalence of distributed computing, online learning problems with batched feedback has captured increasing attention \citep[e.g.,][]{cesa2013online}. In their seminal work, \cite{perchet2016batched} considered batched bandits with two arms, and provided a lower bound for the static grid. 
It was then generalized by \cite{gao2019batched} to finite-armed bandit problems. Soon afterwards, \cite{zhang2020inference} studied inference problems for batched bandits; \cite{esfandiari2021regret} studied batched linear bandits and batched adversarial bandits; \cite{han2020sequential} and \cite{ruan2021linear} provided solutions for batched contextual linear bandits; \cite{li2022gaussian} studied batched bandits from a Bayesian perspective. Batched dueling bandits \citep{agarwal2022batched} and batched Lipschitz bandits \citep{fenglipschitz} have also been studied. Parallel to the regret control regime, best arm identification with limited number of batches was also investigated \citep{agarwal2017learning,jun2016top}. Top-$k$ arm identification in the collaborative learning framework is also closely related to the batched setting, where the goal is to minimize the number of iterations (or communication steps) between agents. In this setting, tight bounds have been obtained recently \citep{tao2019collaborative,karpov2020collaborative}.

\section{Proof of Theorem \ref{thm:blin-upper}}\label{app:thm1}

\textbf{Theorem \ref{thm:blin-upper}.} If Assumption \ref{ass:hoeffding} and \ref{ass:lip} are satisfied, then output arm $\widetilde{x}^*$ of BLiE algorithm with total budget $T$, edge-length sequence $r_m=2^{-m}$, $\alpha=2L+2$ and $\beta$ satisfies
\begin{equation*}
    \Delta_{\widetilde{x}^*}\leq c\cdot T^{-\frac{1}{d_z+\beta}},
\end{equation*}
where $d_z$ is the zooming dimension and $c$ is a constant. In addition, BLiE needs no more than $\frac{1}{d_z+\beta}\log T$ batches to achieve this simple regret.

\begin{proof}
In the following analysis, we let $B$ be the total number of batches of the BLiE run ($B-1$ batches in the for-loop and $1$ clean-up batch). In each batch $m$, each cube is assigned with budget $n_m=\frac{1}{r_m^\beta}$. Lemma \ref{lem:eli} implies that each cube in $\mathcal{A}_m$ is a subset of $S((8L+8)r_m)$, so from the definition of zooming number and zooming dimension, we have $|\mathcal{A}_m|\leq N_{r_m}\leq C_z\cdot r_m^{-d_z}$. Therefore, the total budget of batch $m$ is upper bounded by $C_z\cdot r_m^{-(d_z+\beta)}$. Line 7 of Algorithm \ref{BLiN} yields that the sum of budgets of full $B$ batches is greater than $T$, so we have 
\begin{align*} 
    T\leq\sum_{m=1}^BC_z\cdot r_m^{-(d_z+\beta)}&\leq C_z\cdot\sum_{m=1}^B\left(2^{B-m}\cdot r_B\right)^{-(d_z+\beta)}\\
    &\leq\frac{C_z}{1-2^{-(d_z+\beta)}}r_B^{-(d_z+\beta)},
\end{align*} 
where the second inequality follows from $r_m=2^{-m}$, and thus $r_B\leq c_0\cdot T^{-\frac{1}{d_z+\beta}}$, where $c_0=\left(\frac{1-2^{-(d_z+\beta)}}{C_z}\right)^{-\frac{1}{d_z+\beta}}$. Lemma \ref{lem:eli} shows that $\Delta_x\leq(8L+8)r_B$ for any $x\in\cup_{C\in\mathcal{A}_B}C$, so we conclude that $\Delta_{\widetilde{x}^*}\leq c\cdot T^{-\frac{1}{d_z+\beta}}$, where $c=(8L+8)c_0$. Besides, since $r_B=2^{-B}$, no more than $B^*=\frac{1}{d_z+\beta}\log T-\log c_0$ batches are needed to achieve $r_B<c_0\cdot T^{-\frac{1}{d_z+\beta}}$ and $\Delta_{\widetilde{x}^*}\leq c\cdot T^{-\frac{1}{d_z+\beta}}$.
\end{proof}

\section{Proof of Lemma \ref{lem:eli}}
\label{app:lem1}

\textbf{Lemma \ref{lem:eli}.} For any $m\geq1$, any $C\in\mathcal{A}_m$ and any $x\in C$, simple regret of arm $x$ satisfies 
\begin{align*}
    \Delta_x\leq(4L+4)r_{m-1}.
\end{align*}

The proof of Lemma \ref{lem:eli} is based on the following two results.

\begin{lemma}\label{lem:concen}
    For any $m\geq1$, any cube $C\in\mathcal{A}_m$, selected arm $x_C$ and any arm $x\in C$,
    \begin{equation*}
        |\ell(x_C,n_m)-\mu(x)|\leq L\cdot r_m + n_m^{-\frac{1}{\beta}}.
    \end{equation*}
\end{lemma}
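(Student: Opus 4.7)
The plan is to prove Lemma \ref{lem:concen} by a direct triangle-inequality decomposition that isolates one term controlled by each of the two standing assumptions. Specifically, I would write
\begin{equation*}
    |\ell(x_C, n_m) - \mu(x)| \leq |\ell(x_C, n_m) - \mu(x_C)| + |\mu(x_C) - \mu(x)|
\end{equation*}
and then bound the two terms separately.

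For the first term, Assumption \ref{ass:hoeffding} directly yields $|\ell(x_C, n_m) - \mu(x_C)| \leq n_m^{-1/\beta}$, since the assumption applies pointwise to any arm $x \in \mathcal{X}$ and in particular to $x_C$. For the second term, Assumption \ref{ass:lip} gives $|\mu(x_C) - \mu(x)| \leq L \cdot \|x_C - x\|_\infty$. Since $x_C$ and $x$ both lie in the same standard cube $C \in \mathcal{A}_m$, and cubes in $\mathcal{A}_m$ have edge length $r_m$ under the $\ell_\infty$ metric (Remark \ref{rem:doubling}), we have $\|x_C - x\|_\infty \leq r_m$, yielding $|\mu(x_C) - \mu(x)| \leq L \cdot r_m$. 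Combining the two bounds gives the claim.

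There is essentially no obstacle here: the lemma is a routine consequence of the triangle inequality together with the two assumptions, and the only thing one has to be careful about is the choice of metric (the $\ell_\infty$ convention fixed in Remark \ref{rem:doubling}), which ensures the diameter of a standard cube of edge length $r_m$ is exactly $r_m$ rather than $\sqrt{d}\, r_m$. This lemma is the basic concentration/approximation building block that Lemma \ref{lem:eli}, and then Theorem \ref{thm:blin-upper} and Theorem \ref{thm:ace}, will invoke when arguing that cubes containing arms with small optimal gap are never mistakenly eliminated.
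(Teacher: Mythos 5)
Your proof is correct and follows the same route as the paper: the triangle inequality splitting $|\ell(x_C,n_m)-\mu(x)|$ into $|\ell(x_C,n_m)-\mu(x_C)|$ (bounded by Assumption \ref{ass:hoeffding}) and $|\mu(x_C)-\mu(x)|$ (bounded by Lipschitzness and the cube diameter under $\|\cdot\|_\infty$). The remark about the $\ell_\infty$ metric giving diameter exactly $r_m$ is a point the paper leaves implicit, but otherwise the arguments are identical.
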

\begin{proof}
    Fix a cube $C\in\mathcal{A}_m$ and the selected arm $x_C$. Assumption \ref{ass:hoeffding} gives that
    \begin{align*}
        |\ell(x_C, n_m)-\mu(x_C)|\leq n_m^{-\frac{1}{\beta}}.
    \end{align*}
    By Lipschitzness of $\mu$, it is obvious that 
    \begin{align*}
        \left|\mu(x_C)-\mu(x)\right|\leq L\cdot r_m, \quad \forall x \in C.
    \end{align*} 
    Consequently, we have $|\ell(x_C,n_m)-\mu(x)|\leq L\cdot r_m + n_m^{-\frac{1}{\beta}}$.
\end{proof} 

\begin{lemma}\label{nel}
    The optimal arm $x^*=\arg\min\mu(x)$ is not eliminated in a BLiE run.
\end{lemma}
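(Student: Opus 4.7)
The plan is to prove by induction on $m$ that for every $m = 1, 2, \ldots, B-1$, the standard cube $C_m^* \in \mathcal{A}_m$ that contains $x^*$ survives the elimination step. Since $\mathcal{A}_1$ partitions $[0,1]^d$, the base case is immediate: there is a unique $C_1^* \in \mathcal{A}_1$ with $x^* \in C_1^*$. For the inductive step, if $C_m^*$ is not eliminated, then the Equal Partition Rule splits $C_m^*$ into subcubes of edge length $r_{m+1}$, and one of these subcubes (which we name $C_{m+1}^*$) contains $x^*$ and sits in $\mathcal{A}_{m+1}$. So everything reduces to showing that $C_m^*$ is never eliminated.

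The crucial observation is that the budget choice $n_m = r_m^{-\beta}$ makes the two error sources in Lemma \ref{lem:concen} balance: $n_m^{-1/\beta} = r_m$, so Lemma \ref{lem:concen} gives $|\ell(x_C, n_m) - \mu(x)| \leq (L+1) r_m$ for every $C \in \mathcal{A}_m$ and every $x \in C$. I would use this in two directions. First, for an arbitrary active cube $C \in \mathcal{A}_m$, choosing any point $x \in C$ yields $\ell(x_C, n_m) \geq \mu(x) - (L+1) r_m \geq \mu^* - (L+1) r_m$, since $\mu(x) \geq \mu^*$ by definition of $\mu^*$. Taking the minimum over $C \in \mathcal{A}_m$ gives the lower bound $\ell_m^{\min} \geq \mu^* - (L+1) r_m$.

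Second, apply Lemma \ref{lem:concen} to $C_m^*$ with the specific point $x = x^*$: this gives $\ell(x_{C_m^*}, n_m) \leq \mu(x^*) + (L+1) r_m = \mu^* + (L+1) r_m$. Subtracting the two bounds,
\begin{equation*}
\ell(x_{C_m^*}, n_m) - \ell_m^{\min} \leq 2(L+1) r_m = \alpha r_m,
\end{equation*}
with the chosen value $\alpha = 2L + 2$. The Arm Elimination Rule only eliminates $C_m^*$ if $\ell(x_{C_m^*}, n_m) - \ell_m^{\min} > \alpha r_m$, so $C_m^*$ survives. This closes the induction.

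There is no real obstacle here; the argument is a direct bookkeeping exercise once the budget schedule $n_m = r_m^{-\beta}$ is used to match the Lipschitz error with the statistical error. The only subtle point to watch is the strict versus non-strict inequality in the elimination criterion (as stated in Algorithm \ref{BLiN}, elimination uses a strict $>$, so the boundary case $\ell(x_{C_m^*}, n_m) - \ell_m^{\min} = \alpha r_m$ is safe). It is also worth flagging that if $x^*$ happens to lie on the boundary shared by several standard cubes, any one of these cubes may be designated as $C_m^*$ for the induction since the bound $\ell(x_{C_m^*}, n_m) \leq \mu^* + (L+1) r_m$ depends only on the inequality $x^* \in \overline{C_m^*}$ together with Lipschitz continuity of $\mu$.
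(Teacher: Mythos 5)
Your proof is correct and follows essentially the same route as the paper's: apply Lemma \ref{lem:concen} with $n_m^{-1/\beta}=r_m$ to bound $\ell(x_{C_m^*},n_m)$ from above by $\mu^*+(L+1)r_m$ and $\ell_m^{\min}$ from below by $\mu^*-(L+1)r_m$, so the gap is at most $\alpha r_m=(2L+2)r_m$ and the strict elimination threshold is never crossed. Your write-up is in fact slightly more careful than the paper's (which compresses the two one-sided bounds into a single display with the difference written in the opposite order), and your explicit induction establishing that $C_m^*\in\mathcal{A}_m$ is well-defined, plus the remarks on the strict inequality and boundary cubes, are appropriate bookkeeping rather than new ideas.
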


\begin{proof}
    We use $C^*_m$ to denote the cube containing $x^*$ in $\A_m$. Here we proof that $C^*_m$ is not eliminated in round $m$.
    	
    For any cube $C \in \A_m$ and $x\in C$, we have 
    \begin{align*} 
    	\ell(x_{C^*_m}, n_m) - \ell(x_C,n_m)\leq\mu(x^*) + n_m^{-\frac{1}{\beta}} + L\cdot r_m-\mu(x) +  n_m^{-\frac{1}{\beta}} + L\cdot r_m\leq(2L+2)r_m . 
    \end{align*} 
    Then from the elimination rule, $C^*_m$ is not eliminated. 
\end{proof}

\begin{proof}[Proof of Lemma \ref{lem:eli}]
    For $m=1$, the conclusion holds directly from the Lipschitzness of $\mu$. For $m>1$, let $C_{m-1}^*$ be the cube in $\A_{m-1}$ such that $x^* \in C_{m-1}^*$. From Lemma \ref{nel}, this cube $C_{m-1}^*$ is well-defined. 
    For any cube $C\in\mathcal{A}_m$ and $x\in C$, it is obvious that $x$ is also in the parent of $C$ (the cube in the previous round that contains $C$), which is denoted by $C_{par}$. 
    Thus for any $x \in C$, it holds that
    \begin{align*}
    	\Delta_x=\mu(x)-\mu^*\leq\ell(x_{C_{par}}, n_{m-1})+n_{m-1}^{-\frac{1}{\beta}}+L \cdot r_{m-1}-\ell(x_{C^*_{m-1}}, n_{m-1})+n_{m-1}^{-\frac{1}{\beta}}+L \cdot r_{m-1},
    \end{align*}
	where the inequality uses Lemma \ref{lem:concen}. 
    	
    Equality $n_{m-1}=\frac{1}{r_{m-1}^\beta}$ gives that 
    \begin{align*}
        \Delta_x 
        &\le\ell(x_{C_{par}}, n_{m-1})-\ell(x_{C^*_{m-1}}, n_{m-1})+(2L+2)r_{m-1}. 
    \end{align*}
    It is obvious that $\ell(x_{C^*_{m-1}}, n_{m-1})\geq\ell_{m-1}^{\min}$. Moreover, since the cube $C_{par}$ is not eliminated, from the elimination rule we have
    \begin{align*}
        \ell(x_{C_{par}}, n_{m-1})-\ell_{m-1}^{\min}\le (2L+2)r_{m-1}.
    \end{align*}
    Hence, we conclude that $\Delta_x\leq(4L+4)r_{m-1}$.
\end{proof}

\section{Definition \ref{def:ace} and Theorem \ref{thm:ace} without Decreasing Assumption}\label{app:race}

\textbf{Defnintion \ref{def:ace}.} For a problem with ambient dimension $d$, zooming dimension $d_z$ and time horizon $T$, we denote $c_1=\frac{d_z+\beta-1}{(d_z+\beta)(d+\beta)}\log T$ and $c_{i+1} = {\eta c_i }$ for any $i\ge 1$, where $\eta=\frac{d+1-d_z}{d+\beta}$. Then we let $\alpha_n=\lfloor\sum_{i=1}^nc_i\rfloor$, $\beta_n=\lceil\sum_{i=1}^nc_i\rceil$, and inductively define ACE Sequence $\{r_m\}_{m\in\mathbb{N}}$ as $r_m=\min\{r_{m-1}, 2^{-\alpha_k}\}$ for $m=2k-1$ and $r_m=2^{-\beta_k}$ for $m=2k$. Since every $c_i$ is positive, it is easy to see $\{r_m\}$ is a decreasing sequence. If there exists $m$ such that $r_m=r_{m-1}$, then we skip the $m$-th batch when using ACE Sequence in BLiE.

\textbf{Theorem \ref{thm:ace}.} If Assumption \ref{ass:hoeffding} and \ref{ass:lip} are satisfied, then output arm $\widetilde{x}^*$ of BLiE algorithm with total budget $T$, ACE Sequence $\{r_m\}$, $\alpha=2L+2$ and $\beta$ satisfies $$\Delta_{\widetilde{x}^*}\leq c\cdot T^{-\frac{1}{d_z+\beta}},$$
where $d_z$ is the zooming dimension and $c$ is a constant. In addition, BLiE needs $\mathcal{O}(\log\log T)$ batches to achieve this simple regret.

\begin{proof}
We let $B$ be the total number of batches of the BLiE run ($B-1$ batches in the for-loop and $1$ clean-up batch), and $N_m$ be the total budget of batch $m$. In the following analysis, we bound $N_m$ for $m=2k-1$ and $m=2k$ separately, and then obtain the simple regret upper bound. For $m$ such that $r_m=r_{m-1}$, we skip batch $m$ and define $N_m=0$. Then the following bounds are still hold. Thus, without loss of generality, we assume $r_m<r_{m-1}$.

Firstly, we consider the case $m=2k-1$. Recall that $r_m=\min\{r_{m-1}, 2^{-\alpha_k}\}$, so $r_m<r_{m-1}$ yields that $r_m=2^{-\alpha_k}$. For convenience, we let $\widetilde{r}_k=2^{-\sum_{i=1}^kc_i}$, and thus we have $\widetilde{r}_{k-1}\geq r_{m-1}\geq r_m\geq \widetilde{r}_{k}$. Recall that $\mathcal{A}_{m-1}^+$ is set of cubes not eliminated in batch $m-1$, and $\cup_{C\in\mathcal{A}_{m-1}^+}C=\cup_{C^\prime\in\mathcal{A}_m}C^\prime$. Lemma \ref{lem:eli} implies that each cube in $\mathcal{A}_{m-1}^+$ is a subset of $S((4L+4)r_{m-1})$, and thus $\left|\mathcal{A}_{m-1}^+\right|\leq N_{r_{m-1}}\leq C_z\cdot r_{m-1}^{-d_z}$. The total budget of batch $m$ is
\begin{align*}
    N_m=|\mathcal{A}_m|\cdot n_m = \left(\frac{r_{m-1}}{r_m}\right)^d\left|\mathcal{A}_{m-1}^+\right|\cdot n_m
    \leq C_z\cdot\frac{r_{m-1}^{d+1-d_z}\cdot r_m^{-d-\beta}}{r_{m-1}} 
    \leq C_z\cdot\frac{\widetilde{r}_{k-1}^{d+1-d_z}\cdot\widetilde{r}_k^{-d-\beta}}{r_{m-1}}.
\end{align*}
For the numerator, we have
\begin{align*}
    \widetilde{r}_{k-1}^{d+1-d_z}\cdot\widetilde{r}_k^{-d-\beta}=2^{-\left(\sum_{i=1}^{k-1}c_i\right)(d+1-d_z)+\left(\sum_{i=1}^{k}c_i\right)(d+\beta)}=2^{\left(\sum_{i=1}^{k-1}c_i\right)(d_z+\beta-1)+c_m(d+\beta)}.
\end{align*}
Define $C_m=\left(\sum_{i=1}^{m-1}c_i\right)(d_z+\beta-1)+c_m(d+\beta)$. Since $c_m=c_{m-1}\cdot\frac{d+1-d_z}{d+\beta}$, calculation shows that $C_m = (\sum_{i=1}^{m-2} c_i) (d_z+\beta-1)  + c_{m-1}(d+\beta)+ c_{m-1}(d_z+\beta-1-d-\beta)+c_m(d+\beta)=C_{m-1}$. Thus for any $m$, we have $C_M=C_1=\frac{d_z+\beta-1}{d_z+\beta}\log T$. Hence,
\begin{equation*}
    N_m\leq C_z\cdot2^{\frac{d_z+\beta-1}{d_z+\beta}\log T}/r_{m-1}=C_z\cdot T^{\frac{d_z+\beta-1}{d_z+\beta}}/r_{m-1}.
\end{equation*}

Secondly, we consider the case $m=2k$. Recall that $r_{m-1}=\min\{r_{m-2}, 2^{-\alpha_k}\}$. It is easy to verify that $r_{m-2}<2^{-\alpha_k}$ only happens when $r_{m-2}=r_{m}$, so $r_m<r_{m-1}$ yields that $r_{m-1}=2^{-\alpha_k}$. Lemma \ref{lem:eli} implies that each cube in $\mathcal{A}_m$ is a subset of $S((8L+8)r_m)$. Similar argument to the first case shows that $\left|\mathcal{A}_m\right|\leq C_z\cdot r_m^{-d_z}$ and $N_m\leq C_z\cdot r_m^{-(d_z+\beta)}$.

Line 7 of Algorithm \ref{BLiN} ensures that the sum of budgets of full $B$ batches is greater than $T$. Therefore, combining the above two cases, we have
\begin{align}\label{app-ace1}
    T\leq&\sum_{\substack{m=2k-1, \\ m\leq B }}C_z \cdot\frac{T^{\frac{d_z+\beta-1}{d_z+\beta}}}{r_{m-1}}+\sum_{\substack{m=2k,\\m\leq B}}C_z\cdot \frac{1}{r_m^{d_z+\beta}}.
\end{align}

The rounding step in Definition \ref{def:ace} yields that $r_m\leq\frac{r_{m-1}}{2}$ for any $m$. If $B$ is odd, from (\ref{app-ace1}) we have 
\begin{equation}\label{app-ace2}
    T\leq2C_z\cdot T^{\frac{d_z+\beta-1}{d_z+\beta}}\cdot\frac{1}{r_{B-1}}+2C_z\cdot\frac{1}{r_{B-1}^{d_z+\beta}}.
\end{equation}

We set $c_r=\max\{(4C_z)^\frac{1}{d_z+\beta}, 4C_z\}$, then $\frac{1}{c_r^{d_z+\beta}}+\frac{1}{c_r}\leq\frac{1}{2C_z}$. If $r_{B-1}>c_r\cdot T^{-\frac{1}{d_z+\beta}}$, then calculation shows that 
\begin{align*}
    2C_z\cdot T^{\frac{d_z+\beta-1}{d_z+\beta}}\cdot\frac{1}{r_{B-1}}+2C_z\cdot\frac{1}{r_{B-1}^{d_z+\beta}}<T,
\end{align*}
which contradicts (\ref{app-ace2}). Therefore, we have $r_{B-1}\leq c_r\cdot T^{-\frac{1}{d_z+\beta}}$. Lemma \ref{lem:eli} shows that $\Delta_x\leq(4L+4)r_{B-1}$ for any $x\in\cup_{C\in\mathcal{A}_B}C$, so we have $\Delta_{\widetilde{x}^*}\leq c\cdot T^{-\frac{1}{d_z+\beta}}$, where $c=(8L+8)\cdot\max\{c_r, 1\}$. 

If $B$ is even, from (\ref{app-ace1}) we have
\begin{equation*}
    T\leq2C_z\cdot T^{\frac{d_z+\beta-1}{d_z+\beta}}\cdot\frac{1}{r_{B}}+2C_z\cdot\frac{1}{r_{B}^{d_z+\beta}}.
\end{equation*}
Similar arguments yield that $r_B\leq c_r\cdot T^{-\frac{1}{d_z+\beta}}$. Lemma \ref{lem:eli} shows that $\Delta_x\leq(4L+4)r_{B-1}$ for any $x\in\cup_{C\in\mathcal{A}_B}C$. Moreover, since $B$ is even, from definition of ACE Sequence we have $r_{B-1}\leq2r_B$. Consequently, in this case we also have $\Delta_{\widetilde{x}^*}\leq c\cdot T^{-\frac{1}{d_z+\beta}}$.

Finally, we consider the communication bound. For any $B^*$, $\widetilde{r}_{B^*}=2^{-\sum_{i=1}^{B^*}c_i}=2^{-c_1\cdot\frac{1-\eta^{B^*}}{1-\eta}}=T^{-\frac{1}{d_z+\beta}}\cdot T^{\frac{\eta^{B^*}}{d_z+\beta}}$. Then by choosing $B^*\geq\frac{\log\log T-\log(d_z+\beta)}{\log\frac{d+\beta}{d+1-d_z}}$, we have $\frac{\eta^{B^*}}{d_z+\beta}\leq\log T$ and $\widetilde{r}_{B^*}\leq T^{-\frac{1}{d_z+\beta}}$. Definition \ref{def:ace} shows that $r_m<\widetilde{r}_{B^*}$ for any $m\geq2B^*$. As a consequence, no more than $\hat{B}=\frac{2\log\log T}{\log\frac{d+\beta}{d+1-d_z}} + 1$ batches are needed to achieve $r_{\hat{B}-1}\leq T^{-\frac{1}{d_z+\beta}}$
and 
\begin{equation*}
    \Delta_{\widetilde{x}^*}\leq(4L+4)r_{\hat{B}-1}\leq(4L+4)T^{-\frac{1}{d_z+\beta}}\leq c\cdot T^{-\frac{1}{d_z+\beta}},
\end{equation*}
where the first inequality follows from Lemma \ref{lem:eli}.
\end{proof}

\section{Proof of Theorem \ref{thm:lower-us}}\label{app:us}
The pseudo code of uniform search is presented below.

\begin{algorithm}[H]
	\caption{Uniform Search} 
	\label{uni-exploration} 
	\begin{algorithmic}[1]  
		\STATE \textbf{Input.} Arm set $\mathcal{X}=[0,1]^d$; Total budget $T$; Grid length $r$; $n = Tr^d$.
		\STATE Equally partition $\mathcal{X}$ to $N=\frac{1}{r^d}$ subcubes and define $\mathcal{A}$ as the collection of these subcubes. 
		\FOR{$C\in\A$}
		    \STATE Uniformly choose an arm $x_C\in C$. 
		    \STATE Evaluate arm $x_C$ with budget $n$. Recive the loss $\ell(x_C, n)$.
		\ENDFOR
		\STATE Compute $C^*=\mathop{\arg\min}_{C\in\A}\ell(x_C, n)$.\label{ue:opt-cube}
		\STATE \textbf{Output} $\widetilde{x}^*=x_{C^*}$.
	\end{algorithmic} 
\end{algorithm}

\textbf{Theorem \ref{thm:lower-us}.} For any total budget $T$, dimension $d$ and grid length $r$, there exists an instance with zooming dimension $d_z=0$ such that the uniform search strategy returns an arm $\widetilde{x}^*$ with optimal gap $\E\Delta_{\widetilde{x}^*}\geq \frac{1}{2}T^{-\frac{1}{d+\beta}}$.

\begin{proof}
We construct a problem instance such that $\mathcal{X}=[0,1]^d$ and $\mu(x)=f(\|x\|_\infty)\triangleq C+\|x\|_\infty$ for some constant $C>0$. For this instance, we have $x^*=0$ and $\Delta_x=\mu(x)-\mu(x^*)=\|x\|_\infty$. This instance satisfies Assumption \ref{ass:lip} with $L=1$. Similar arguments to the example in Section \ref{sec:zooming} yields that the zooming dimension of this instance equals to $0$, with zooming constant $C_z=16^d$.

After given edge length $r$, $\mathcal{X}$ is equally partitioned into $N=\frac{1}{r^d}$ cubes $C_1\cdots C_N$. Additionally, we define the marginal grid point $g_i=\frac{i}{N}$. In the following analysis, we consider the situations where $r\geq T^{-\frac{1}{d+\beta}}$ and $r<T^{-\frac{1}{d+\beta}}$ separately.

If $r\geq T^{-\frac{1}{d+\beta}}$, we define $\ell(x,n)=\mu(x)$ for any $x$ and $n$. For this instance, it is easy to see that the optimal cube $C^*$ in Line \ref{ue:opt-cube} is $C_1=[0,g_1]^d$. Therefore, the output arm is uniformly selected from $C_1$, and
\begin{equation}\label{ue:lower-bound1}
    \E \Delta_{\widetilde{x}^*}=\E_{x\sim \mathrm{Unif}(C_1)}\Delta_x>\frac{r}{2}\geq\frac{1}{2}T^{-\frac{1}{d+\beta}}.
\end{equation}
If $r<T^{-\frac{1}{d+\beta}}$, then each cube is played for $n=Tr^d<T^{\frac{\beta}{d+\beta}}$ times. Since the edge length $g_i-g_{i-1}$ equals to $r$ for each $i$, there exists an integer $0<k_0\leq N$ such that $\frac{1}{2}T^{-\frac{1}{d+\beta}}\leq f(g_{k_0})-\mu(x^*)\leq T^{-\frac{1}{d+\beta}}$. We set $\mathcal{A}_1=\{C_i:\;1\leq i\leq N,\;C_i\subset[0, g_{k_0}]^d\} $ and $\mathcal{A}_2=\{C_i:\;1\leq i\leq N,\;C_i\not\subset[0, g_{k_0}]^d\} $ and then define $\ell(x,n)=\mu(x)+n^{-\frac{1}{\beta}}$ for $x\in\cup_{C\in\mathcal{A}_1}C$, and $\ell(x,n)=\mu(x)-n^{-\frac{1}{\beta}}$ for $x\in\cup_{C\in\mathcal{A}_2}C$.

Since $n<T^{\frac{\beta}{d+\beta}}$, for any $x\in\cup_{C\in\mathcal{A}_1}C$, we have
\begin{equation}\label{error:first-grid}
    \ell(x,n)=\mu(x)+n^{-\frac{1}{\beta}}>\mu(x^*)+\left(T^{\frac{\beta}{d+\beta}}\right)^{-\frac{1}{\beta}}=C+T^{-\frac{1}{d+\beta}}.
\end{equation}
Since $f(g_{k_0})-\mu(x^*)\leq T^{-\frac{1}{d+\beta}}$ and $r<T^{-\frac{1}{d+\beta}}$, we have $f(g_{k_0+1})=f(g_{k_0})+r\leq \mu(x^*)+2T^{-\frac{1}{d+\beta}}$. Therefore, for any $x\in [g_{k_0},g_{k_0+1}]^d$,
\begin{align}\label{err:second-grid}
        \ell(x,n)&=\mu(x)-n^{-\frac{1}{\beta}}<f(g_{k_0+1})-\left(T^{\frac{\beta}{d+\beta}}\right)^{-\frac{1}{\beta}}\leq C+2T^{-\frac{1}{d+\beta}}-T^{-\frac{1}{d+\beta}}=C+T^{-\frac{1}{d+\beta}}.
\end{align}
Combining (\ref{error:first-grid}) and (\ref{err:second-grid}), we show that the loss $\ell(x,n)$ for $x\in\cup_{C\in\mathcal{A}_1}C$ is sub-optimal, and the optimal cube $C^*$ in Line 7 belongs to $\mathcal{A}_2$. From the definition of the instance, for any $x\in\cup_{C\in\mathcal{A}_2}C$, the optimal gap $\Delta_x\geq f(g_{k_0})-\mu(x^*)\geq\frac{1}{2}T^{-\frac{1}{d+\beta}}$. As a consequence, we have
\begin{equation}\label{ue:lower-bound2}
    \E\Delta_{\widetilde{x}^*}\geq\frac{1}{2}T^{-\frac{1}{d+\beta}}.
\end{equation}

Finally, combining (\ref{ue:lower-bound1}) and (\ref{ue:lower-bound2}), we arrive at the conclusion of the theorem.
\end{proof}

\section{Proof of Theorem \ref{thm:lower-hb}}\label{app:lower-hb}

The pseudo code of random-search strategy is presented below.

\begin{algorithm}[H]
    \caption{Random-Search-Based Algorithm} 
    \label{random-search} 
    \begin{algorithmic}[1]  
        \STATE \textbf{Input.} Arm set $\mathcal{X}=[0,1]^d$; Total budget $T$; Number of selected arms $N\leq T$.
        \STATE Select $N$ arms $\mathcal{X}_s=\{x_i\}_{i=1}^N\subseteq\mathcal{X}$, where each $x_i$ is uniformly sampled from $\mathcal{X}$.
        \STATE Choose an arm $\widetilde{x}^*\in\mathcal{X}_s$ according to some policy.
        \STATE \textbf{Output} $\widetilde{x}^*$.
    \end{algorithmic} 
\end{algorithm}

\textbf{Theorem \ref{thm:lower-hb}.} 
For any total budget $T$, dimension $d$, zooming dimension $d_z$ and number of selected arms $N$, there exists an instance such that any random-search-based algorithm returns an arm $\widetilde{x}^*$ with optimal gap $\E\Delta_{\widetilde{x}^*}\geq c\cdot T^{-\frac{1}{d-d_z}}$, where $c$ is a constant.

\begin{proof}
We consider an instance with ambient dimension $d$ and zooming dimension $d_z$. Let $x_r$ be a uniformly chosen arm and $\mu_r=\mu(x_r)$. Then we have $\Pr(\mu_r<\mu^*+\varepsilon)=\m(\{x:\mu(x)-\mu^*<\varepsilon\})=\m(S(\varepsilon))$. Definition of zooming number and zooming dimension yields that $S(\varepsilon)$ contains $N_{\frac{\varepsilon}{8L+8}}$ standard cubes with edge length $\frac{\varepsilon}{8L+8}$, and $N_{\frac{\varepsilon}{8L+8}}\leq C_z\cdot\left(\frac{\varepsilon}{8L+8}\right)^{-d_z}$. We denote the set of these standard cubes as $\mathcal{A}=\{C_1,\cdots, C_{N_{\frac{\varepsilon}{8L+8}}}\}$. 

For any standard cube $C$ with edge length $\frac{\varepsilon}{8L+8}$ such that $C\notin \mathcal{A}$, there exists some $x_C\in C$ such that $\Delta_{{x_C}}\geq\varepsilon$. Since $\mu$ is $L$-Lipschitz, for any $x\in C$, we have
\begin{equation*}
    \Delta_x\geq\Delta_{x_C}-L\cdot\frac{\varepsilon}{8L+8}\geq\frac{7}{8}\varepsilon.
\end{equation*}
As a consequence, $S(\frac{7}{8}\varepsilon)$ is covered by $\mathcal{A}$, and the measure is bounded by $\m(S(\frac{7}{8}\varepsilon))\leq C_z\cdot\left(\frac{\varepsilon}{8L+8}\right)^{d-d_z}$. Therefore, we have

\begin{equation}\label{opt-region-bound}
    \Pr\left(\mu_r<\mu^*-\frac{7}{8}\varepsilon\right)\leq C_z\cdot\left(\frac{\varepsilon}{8L+8}\right)^{d-d_z}.
\end{equation}
The following analysis is similar to the lower bound proof in \cite{carpentier2015simple}. We set $\varepsilon=c_0\cdot T^{-\frac{1}{d-d_z}}$, where $c_0=\frac{8L+8}{C_z^\frac{1}{d-d_z}}$, and (\ref{opt-region-bound}) gives that $\Pr\left(\mu_r<\mu^*-\frac{7c_0}{8}\cdot T^{-\frac{1}{d-d_z}}\right)\leq\frac{1}{T}$. Then for the $N$ different arms selected in Algorithm \ref{random-search}, we have
\begin{align*}
    \Pr\left(\mu(x_i)\geq\mu^*+\frac{7c_0}{8}\cdot T^{-\frac{1}{d-d_z}},\;\forall\;1\leq i\leq N\right)\geq\left(1-\frac{1}{T}\right)^N\geq\left(1-\frac{1}{T}\right)^T\geq\frac{1}{4}.
\end{align*}
As a consequence, with probability more than $\frac{1}{4}$, all selected arms have optimal gap larger than $\frac{7c_0}{8}\cdot T^{-\frac{1}{d-d_z}}$, and therefore, with probability larger than $\frac{1}{4}$, the output optimal gap of random search is lower bounded by $\frac{7c_0}{8}\cdot T^{-\frac{1}{d-d_z}}$. Consequently, we have $\E\Delta_{\widetilde{x}^*}\geq\frac{7c_0}{32}\cdot T^{-\frac{1}{d-d_z}}$.
\end{proof}

\section{More Samples Generated Using Different Schedules and Diffusion Steps}
\begin{figure}[ht]
	\centering
	\subfigure[BLiE Result, $T=100$]{
		\includegraphics[width=6cm]{./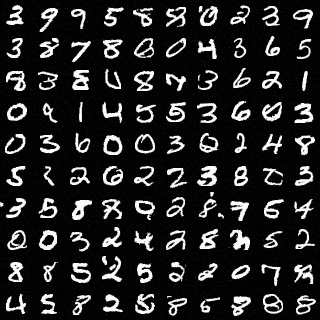}
	}\hspace{+10mm}
	\subfigure[Linear, $T=100$]{
		\includegraphics[width=6cm]{./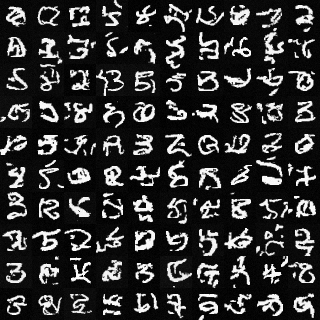}
	}\\
	\subfigure[BLiE Result, $T=200$]{
		\includegraphics[width=6cm]{./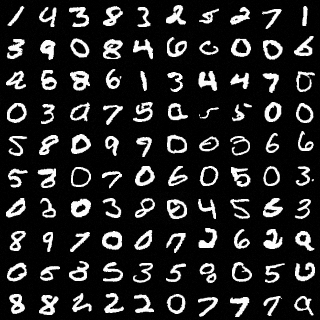}
	}\hspace{+10mm}
	\subfigure[Linear, $T=200$]{
		\includegraphics[width=6cm]{./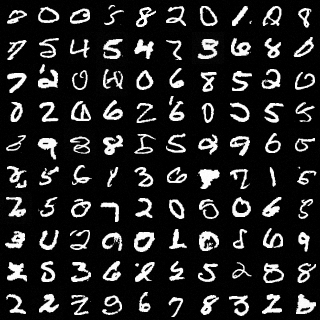}
	}\\
	\subfigure[BLiE Result, $T=1000$]{
		\includegraphics[width=6cm]{./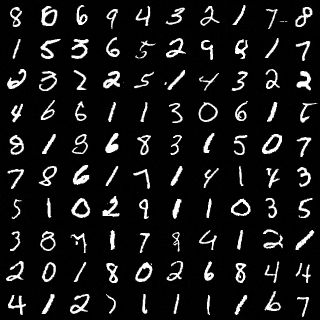}
	}\hspace{+10mm}
	\subfigure[Linear, $T=1000$]{
		\includegraphics[width=6cm]{./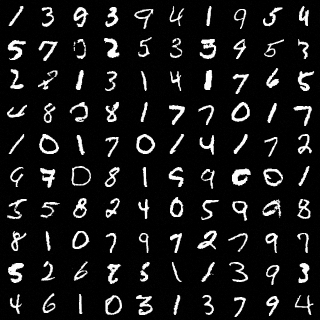}
	}
    \caption{MNIST samples generated using different noise schedules and diffusion steps.}
\end{figure}

\end{document}